\newcommand {\gtrsim} {\ {\raise-.5ex\hbox{$\buildrel>\over\sim$}}\ }
\newcommand {\lesim} {\ {\raise-.5ex\hbox{$\buildrel<\over\sim$}}\ }
\icmltitlerunning{LDA Spectral Model Selction}
\begin{document}

\twocolumn[
\icmltitle{Fast, Guaranteed Spectral Model Selection for Topic Models}

\icmlauthor{E.D. Guti\'errez}{edg@icsi.berkeley.edu}
\icmladdress{UCSD Department of Cognitive Science,
            9500 Gilman Drive, La Jolla, CA 92093 USA}
			
\icmlkeywords{LDA, mixture models, model order, spectral learning, method of moments}

\vskip 0.3in
]
\newtheorem{thm}{Theorem}[section]
\newtheorem{lemma}[thm]{Lemma}
\newtheorem{proposition}[thm]{Proposition}
\newtheorem{corollary}[thm]{Corollary}
\newtheorem{fact}[thm]{Fact}
\newtheorem{definition}[thm]{Definition}
\newtheorem{remark}[thm]{Remark}

\begin{abstract}
The question of how to determine the number of independent latent factors, or topics, in Latent Dirichlet Allocation (LDA) is of great practical importance.  In most applications, the exact number of topics is unknown, and depends on the application and the size of the data set.  We introduce a spectral model selection procedure for topic number estimation that does not require learning the model's latent parameters beforehand and comes with probabilistic guarantees.   The procedure is motivated by the spectral learning approach and relies on adaptations of results from random matrix theory.  In a simulation experiment taken from the nonparametric Bayesian literature, this procedure outperforms the nonparametric Bayesian approach in both accuracy and speed.  We also discuss some implications of our results for the sample complexity and accuracy of popular spectral learning algorithms for LDA. The principles underlying the procedure can be extended to spectral learning algorithms for other exchangeable mixture models with similar conditional independence properties.
\end{abstract}

\section{Introduction}
The question of how to determine the model order--that is, the number of independent latent factors--in mixture models such as Latent Dirichlet Allocation \cite{blei_ng_jordan_03} is of great practical importance.  These models are widely used for tasks ranging from bioinformatics to computer vision to natural language processing.  Finding the least number of latent factors that explains the data improves predictive performance, as well as increasing computational and storage efficiency.  In most appplications, the exact number of latent factors (also known as topics or components) is unknown: model order often depends on the application and increases as the data set grows.  For a fixed training set, the user can subjectively fine-tune the number of topics or optimize it according to objective measures of fit along with the other parameters of  the model, but this is a time-consuming process, and it is not intuitively clear how to increase the number of topics as new data points are encountered without an additional round of fine-tuning.  Moreover, spectral learning procedures can fail if the number of latent factors is underestimated.
\\
In this paper, we present a simple and efficient procedure that estimates model order from the spectral characteristics of the sample cross-correlation matrix of the observed data.  We focus on LDA in this paper in order to illustrate our approach, but our principles be extended to other mixture models with similar conditional independence properties.  Unlike previous approaches to model order selection, the resulting procedure comes with probabilistic guarantees and does not to require computationally expensive learning of the hidden parameters of the model in order to return an estimate of the model order.  The estimate can be further refined by running a spectral learning procedure that does learn the parameters.
\\
Our approach relies on the assumption that the parameter vectors that characterize each of the topics are randomly distributed.  We show that with high probability, the least singular value of the random matrix resulting from collecting these parameter vectors will be well-bounded.  Roughly speaking, randomly distributed topics will be unlikely to be too correlated with each other.  We show that as a result, the approximate number of latent factors can be predictably reovered from the spectral characteristics of the observable first and second moments of the data.
\\
For LDA, the requisite moments can be efficiently computed from the sufficient statistics of the model, namely the term-document co-occurrence matrix.  The usefulness of our procedure is illustrated by the following proposition for the usual case where the number of topics $K$ and the vocabulary size (or dimensionality) $V$ are such that $K = O(V), K < V$  (though we also present results for the more general case $K \le V$ in this paper):

\begin{proposition}\label{main}
Suppose we have an LDA topic model over a vocabulary of size $V$ with concentration parameter $\beta_0 \le \infty$, and we wish to determine how many nonzero topics $K$ there are in the corpus.  Suppose $K = O(V)$ and $K<V$ almost surely.  Then, for $V$ large enough, if we gather $N \ge O(\frac{\ln(V/\delta)}{\epsilon^2})$ independent samples as in Lemma \ref{alpha_concentration}, we can recover the number of topics whose expected proportion is greater than $\epsilon$, with probability greater than $1 - \delta$.
\end{proposition}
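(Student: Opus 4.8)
The plan is to recover the topic count by counting how many singular values of the empirical cross-correlation matrix rise above a threshold set at the sampling noise floor, and to justify this by controlling the population spectrum, the conditioning of the random topic matrix, and the empirical fluctuation in turn. First I would write the (corrected) population second moment in the factored form $M_2 = O\,\operatorname{diag}(w)\,O^\top$, where the columns of the $V\times K$ matrix $O$ are the topic--word distributions and the weights $w_k$ are monotone in the expected topic proportions $p_k$, up to a factor depending only on $\beta_0$. Since $M_2$ is positive semidefinite of rank equal to the number of active topics, recovering the topic count amounts to resolving its nonzero eigenvalues, and a topic with proportion $p_k$ shows up through an eigenvalue whose size is governed jointly by $p_k$ and by how well-conditioned $O$ is.

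The second ingredient is the random-matrix bound announced in the introduction: because the topic vectors are randomly distributed (Dirichlet with concentration $\beta_0$), the least singular value $\sigma_{\min}(O)$ is, with high probability and for $V$ large enough, bounded away from $0$ by a constant depending only on the aspect ratio $K/V<1$ and on $\beta_0$; this is exactly where the hypotheses $K=O(V)$ and $K<V$ almost surely are used. Restricting $M_2$ to the set $S$ of topics with $p_k>\epsilon$ and using eigenvalue monotonicity ($M_2 \succeq O_S\operatorname{diag}(w_S)O_S^\top$), I would show that its $|S|$-th largest eigenvalue is bounded below by a quantity of order $\epsilon\,\sigma_{\min}^2(O)$, while the contribution of the negligible topics keeps the remaining eigenvalues comparably small. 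This produces a spectral gap of order $\epsilon$ that separates the significant topics from the rest.

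Next I would import Lemma~\ref{alpha_concentration}: with $N \ge O(\ln(V/\delta)/\epsilon^2)$ independent samples, the empirical moment $\hat M_2$ satisfies $\|\hat M_2 - M_2\| \le \eta$ with probability at least $1-\delta$, where $\eta$ is of order $\sqrt{\ln(V/\delta)/N}$ and can thus be driven below a fixed fraction of the gap, say $\eta < c\,\epsilon\,\sigma_{\min}^2(O)$. Weyl's perturbation inequality then gives $|\lambda_i(\hat M_2)-\lambda_i(M_2)|\le\eta$ for every $i$, so thresholding the empirical eigenvalues at a level strictly between $\eta$ and the gap counts precisely the eigenvalues coming from topics with proportion above $\epsilon$. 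A union bound over the random-matrix event and the concentration event (absorbing the former's failure probability into $\delta$) yields the claimed guarantee of probability at least $1-\delta$.

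The main obstacle is the second step: securing a lower bound on $\sigma_{\min}(O)$ that does not deteriorate as $V\to\infty$ and whose constant can be made explicit in the aspect ratio $K/V$ and in $\beta_0$, so that a single threshold works uniformly for all large $V$. Random Dirichlet columns are not sub-Gaussian in the most convenient sense, and as $K/V\to1$ the smallest singular value of a random rectangular matrix can concentrate near $0$; the argument must therefore exploit the strict inequality $K<V$ together with the concentration $\beta_0$ to keep $O$ incoherent and well-conditioned, and must ensure that the induced eigenvalue gap genuinely dominates the sampling noise $\eta$ rather than merely matching it. Once that conditioning bound is in hand, the remaining pieces---the moment identity, the monotonicity and Weyl inequalities, and the union bound---are routine.
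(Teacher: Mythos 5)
Your proposal takes essentially the same route as the paper: you factor the corrected second moment through the random topic matrix, invoke a high-probability lower bound on $\sigma_{K}(\Phi)$ for Dirichlet random matrices (exactly the ``main obstacle'' you flag, which is the paper's Theorem \ref{dirichletbound}, proved by reduction to Gamma matrices with i.i.d.\ entries), and finish with moment concentration plus a spectral threshold at level proportional to $\epsilon$ --- your PSD-ordering/Weyl phrasing is just an equivalent rearrangement of the paper's inequality (\ref{alphaineq}), which puts the factor $\sigma_{K}(\Phi)^{-2}$ on the other side. Two small points of comparison: the paper's bound $\sigma_{K}(\Phi) \ge c_0\sqrt{(V-K)/(\beta_0 V K)}$ decays with $V$ rather than being the $V$-independent aspect-ratio constant you hoped for (the procedure compensates by scaling the threshold accordingly), and the sample-concentration step (Lemma \ref{alpha_concentration}) is left unfinished in the paper itself, so your sketch of that step is no less complete than the paper's own argument.
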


The results which allow us to prove this guarantee also provide new insights on sample complexity bounds for spectral learning of mixture models, in particular excess correlation analysis (ECA) \cite{anandkumar_hsu_kakade_12}.  These spectral algorithms have garnered attention partly because they offer better scalability to large data sets than MCMC methods, and partly because they provide probabilistic guarantees on sample complexity that are elusive for MCMC methods.  However, sample complexity results in previous literature bound the estimation error and sample complexity of learning the latent parameter matrix $\Phi$ in terms of $\Phi$ itself: given that in practice $\Phi$ is unknown beforehand, this is of limited practical utility for assessing the confidence of the estimate.  In contrast, our results allow sample complexity to be expressed directly in terms of the known quantity $V$:

\begin{proposition}\label{main2}
Suppose we have an LDA topic model over a vocabulary of size $V$.  Suppose the number of topics $K < V$ is fixed, and the variance of the entries of the latent word-topic matrix $\Phi$ is fixed and finite.  Then, for $V$ large enough, if we gather $N \ge  O(V^2)$ independent samples, we can recover the parameter matrix $\Phi$ with error less than $O(V)^{3/2}$, with probability greater than $1 - \delta$.
\end{proposition}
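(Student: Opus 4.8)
The plan is to reduce the reconstruction guarantee to the excess-correlation-analysis (ECA) perturbation bound of \cite{anandkumar_hsu_kakade_12}, and then to remove its dependence on the unknown matrix $\Phi$ by substituting high-probability control on the singular values of $\Phi$ obtained from random matrix theory. The ECA guarantee bounds the reconstruction error $\|\hat\Phi-\Phi\|$ by a product of two factors: a \emph{statistical} factor, namely the operator-norm estimation error of the empirical pairwise and triple-word moments $\hat M_2,\hat M_3$, which decays like $N^{-1/2}$; and a \emph{spectral} factor, a fixed polynomial in the largest singular value $\sigma_1(\Phi)$ and the reciprocal $\sigma_K(\Phi)^{-1}$ of the smallest nonzero singular value. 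In the existing literature the spectral factor is expressed in terms of the a priori unknown $\Phi$; the whole idea is that, under the randomness assumption on $\Phi$, it can be pinned down in terms of the known dimension $V$.

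First I would control the statistical factor. Using the independent-sampling scheme of Lemma \ref{alpha_concentration} together with a matrix Bernstein inequality applied to the $V\times V$ second-moment estimator (and its analogue for the third-order excess correlation), the empirical moments concentrate around their population values, giving with probability at least $1-\delta/2$ an estimation error that decays as $N^{-1/2}$ up to factors polynomial in $V$ and logarithmic in $1/\delta$. This is the step that, once combined with the spectral factor, fixes the required sample size.

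The crux is bounding the singular values of $\Phi$. Because $\Phi$ is a $V\times K$ matrix whose entries are random with a fixed finite variance $\sigma^2$ and $K$ is held fixed while $V\to\infty$, the normalized Gram matrix $\tfrac{1}{V}\,\Phi^{\top}\Phi$ concentrates around a fixed positive-definite $K\times K$ matrix, so with probability at least $1-\delta/2$ all singular values of $\Phi$ lie near $\sigma\sqrt{V}$; in particular $\sigma_1(\Phi)=O(\sqrt V)$ and $\sigma_K(\Phi)=\Omega(\sqrt V)$. The upper bound is routine, but the high-probability lower bound on the \emph{smallest} singular value is the genuine obstacle: it is the random-matrix heart of the paper, and I would reuse exactly the least-singular-value estimate already developed to establish Proposition \ref{main}, verifying that the fixed-variance hypothesis here is strong enough to make the enabling anti-concentration bound hold non-asymptotically at confidence $1-\delta/2$.

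Finally I would assemble the pieces. Substituting $\sigma_1(\Phi)=O(\sqrt V)$ and $\sigma_K(\Phi)^{-1}=O(V^{-1/2})$ into the ECA spectral factor and combining with the $N^{-1/2}$ statistical factor converts the $\Phi$-dependent guarantee into one depending only on $V$ and $N$; bookkeeping the resulting powers of $V$ is the remaining technical work, and it is what produces the claimed reconstruction error $O(V^{3/2})$ once $N\ge O(V^2)$. A union bound over the moment-concentration event and the random-matrix event leaves the total failure probability at most $\delta$, which completes the argument.
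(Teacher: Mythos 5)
Your high-level plan---substitute a random-matrix bound on $\sigma_K(\Phi)$ into the ECA guarantee (Theorem~\ref{SVDcomplexity}) and then track powers of $V$---is exactly the paper's route, but your spectral step contains a genuine error that breaks the argument. You argue that since the entries of $\Phi$ have fixed variance $\sigma^2$, the Gram matrix $\frac{1}{V}\Phi^{T}\Phi$ concentrates, so $\sigma_1(\Phi)=O(\sqrt{V})$ and $\sigma_K(\Phi)=\Omega(\sqrt{V})$. But $\Phi$ is column-stochastic: each column is a Dirichlet draw, so each entry lies in $[0,1]$, has mean $1/V$, and hence has variance at most $(1/V)(1-1/V)\rightarrow 0$; a literally fixed entry variance is impossible for the word-topic matrix. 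The paper instead reads the hypothesis of Proposition~\ref{main2} as fixing the per-entry Dirichlet parameter $\beta_1=\beta_0/V$ (so $\beta_0=\Theta(V)$). Your $\Omega(\sqrt{V})$ concentration argument is valid for the unnormalized i.i.d.\ Gamma matrix $\Gamma^{\bar{\theta}}$ of Fact~\ref{gamma}, not for $\Phi$: passing from $\Gamma^{\bar{\theta}}$ to $\Phi$ (Corollary~\ref{mintheta}) divides each column by its sum, which in the unit-variance scaling is of order $\sqrt{\beta_0 V}$, and that division is exactly what Theorem~\ref{dirichletbound} accounts for, yielding $\sigma_K(\Phi)\ge c_0\sqrt{(V-K)/(\beta_0 VK)}$ with high probability, i.e.\ $\Theta(V^{-1/2})$ in the regime of Proposition~\ref{main2}---the opposite of your scaling. (Note the internal tension in your own write-up: you propose to ``reuse'' the paper's least-singular-value estimate, yet that estimate contradicts the $\sigma_K(\Phi)=\Omega(\sqrt{V})$ conclusion you rely on.)

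This error propagates into the final bookkeeping, which is where the claimed exponents actually come from. In the paper's derivation, $1/\sigma_K(\Phi)^2=O(V)$ gives the sample requirement $N\ge O\left((1/\sigma_K(\Phi)^2)^2\right)=O(V^2)$ via Theorem~\ref{SVDcomplexity}, and $1/\sigma_K(\Phi)^3=O(V^{3/2})$ gives the stated error factor. Under your scaling $\sigma_K(\Phi)^{-1}=O(V^{-1/2})$, the same substitution gives a sample requirement that shrinks as $V$ grows and an error of order $V^{-3/2}N^{-1/2}$, so your bookkeeping cannot ``produce'' $N\ge O(V^2)$ and error $O(V^{3/2})$ except vacuously, and in any case it rests on a false premise about $\Phi$. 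To repair the proof, replace the i.i.d.\ fixed-variance model for the entries of $\Phi$ by the normalized-Gamma representation (Fact~\ref{gamma}, Corollary~\ref{mintheta}) and invoke Theorem~\ref{dirichletbound} with $\beta_0=\Theta(V)$ and $K$ fixed; the rest of your outline then goes through essentially as the paper's does.
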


Taken together, these two results increase the usefulness of spectral algorithms for mixture models by allowing the number of topics to be set in a data-driven manner, and by providing more explicit sample complexity guarantees, giving the user a better idea of the quality of the learned parameters.  This brings spectral methods closer to providing a guaranteed and computationally efficient  alternative for nonparametric Bayesian models.

\subsection{Limitations of existing approaches to model order estimation}
Nonparametric Bayesian models such as the Hierarchical Dirichlet Process (HDP) \cite{teh_jordan_beal_blei_06}  have been useful in addressing the problem of model order estimation.  These models allow a distribution over an infinite number of topics.  When HDP is fitted using a Markov chain Monte Carlo (MCMC) sampling algorithm to optimize posterior likelihood, new topics are sampled as necessitated by the data.  However, training a nonparametric model using MCMC can be impractically slow for the large sample sizes likely to be encountered in many real-world applications.  As is common for MCMC methods, the Gibbs sampler for HDP is susceptible to local optima \cite{griffiths_jordan_tenenbaum_blei_04}.  Indeed, maximum likelihood estimation of topic models in general has been shown to be NP-hard \cite{arora_ge_moitra_12}.  
\\
Another class of methods is based on learning models for a finite range of topics, and then optimizing some function of likelihood or performing a likelihood-based hypothesis test over this range (e.g., the Bayes factor method, or optimization of the Bayesian Information Criterion, Akaike Information Criterion, or perplexity).  Not only do these methods suffer from the same susceptibility to local minima, but they are even more computationally intensive than nonparametric methods when the range of model orders under consideration is large.  This is because the latent parameters of the model must be learned for every single model order under consideration in order to compute the likelihoods as a basis for comparison.  The range of candidate model orders must be pre-specified by the user.  Computational complexity increases linearly as the size of the range under consideration increases.  In addition, they have been outperformed by nonparametric methods in experimental settings \cite{griffiths_jordan_tenenbaum_blei_04}.

On the other hand, spectral learning methods \cite{arora_ge_moitra_12, anandkumar_hsu_kakade_12} have been shown to provide asymptotic guarantees of exact recovery and to be computationally efficient.  However, these techniques require specifying the number of latent factors beforehand, and in some cases produce highly unstable results when the number of latent factors is underestimated \cite{kulesza_rao_singh_13}.  Therefore, a guaranteed procedure for estimating the true number of latent factors should increase the practicality of these methods for learning probabilistic models.

\subsection{Outline}
We will first provide a brief overview of the assumptions of the LDA generative model and discuss how our method is motivated by the spectral learning approach in Section \ref{setting}. In Section \ref{randmatrix}, we adapt non-asymptotic results concerning the singular values of random matrices to this setting. Practicioners interested in implementing our model order estimation method can consult Section \ref{algorithm}, where we describe our procedure for finding the number of topics, demonstrate that our method outperforms a nonparametric Bayesian method on an experimental setting taken from the literature, and discuss some other implications of our results for the accuracy of algorithms for learning $\Phi$, . 

\subsection{Notation}
For a vector $\mathbf{x}$, $\| \mathbf{x} \|$ is the Euclidean norm and $dist(\mathbf{x}, W)$ is the Euclidean distance between $\mathbf{x}$ and a subspace $W$.  For a matrix $A$, $A^{+}:= (A^TA)^{-1}A^T$ is the Moore-Penrose pseudoinverse; $\sigma_i(A)$ is the $i^{th}$ largest singular value, $\lambda_i(A)$ is the largest eigenvalue; and $\|A\|=\sigma_1(A)$ is the operator norm. \emph{a.s.} is "almost surely," and $w.p.$ is "with probability."

\section{Background} 
\subsection{Latent Dirichlet Allocation}
\label{setting}
Latent Dirichlet Allocation \cite{blei_ng_jordan_03} is a generative mixture model widely used in topic modeling.  This model assumes that the data comprise a \emph{corpus} of \emph{documents}.  In turn, each document is made up of discrete, observed \emph{word tokens}.  The observed word tokens are assumed to be generated from $K$ latent topics as follows:

\begin{algorithmic}
\FOR{ each topic $k$}
\STATE Choose a distribution $\boldsymbol{\phi}^{(i)}$ over words from a Dirichlet distribution $\boldsymbol{\phi}^{(i)} \sim$ Dir$(\boldsymbol{\beta})$ .
\ENDFOR
\STATE Collect these vectors into a matrix $\Phi = [\boldsymbol{\phi}^{(1)} | ... | \boldsymbol{\phi}^{(K)}]$  where each topic distribution vector is a column of $\Phi$.  \\
\FOR{ each document $d$ in the corpus}
\STATE Choose a distribution $\mathbf{h}_d$ over the topics, from a Dirichlet distribution $\mathbf{h}_d \sim$ Dir$(\boldsymbol{\alpha})$.\\
\FOR { each word token $v$ in $d$}
\STATE Choose topic $z_v$ from the document's distribution over topics $z_v \sim$ Mult$(\mathbf{h}_d)$. \\
\STATE Choose a word type from the topic's distribution over words $w_v \sim $ Mult$(\boldsymbol{\phi}^{(z_v)})$. For $w_v = i$ represent the word token by $\mathbf{x}_v:= \mathbf{e}_i$ (the $i^{th}$ canonical basis vector). 
\ENDFOR
\ENDFOR
\end{algorithmic}

In the generative process above, the concentration parameter $\beta_0 := \sum^V_{i=1}{\beta_i}$ can be seen as controlling how fine-grained the topics are; the smaller the value of $\beta_0$, the more distinguishable the topics are from each other.   The relative magnitude of each $\alpha_i$ represents the expected proportion of word tokens in the corpus assigned to topic $i$.  The concentration parameter $\alpha_0:= \sum^K_{k=1} \alpha_k$ governs how topically distinct documents are (in the limit $\alpha_0 \rightarrow 0$, we have a model where each document has a single topic rather than a mix of topics \citep{two_svds_suffice}).

\subsection{Spectral properties of mixture models}
\label{spectral}
For a large class of mixture models including LDA and Gaussian Mixture Models, the observed data can be represented as a sequence of exchangeable vectors $\{\mathbf{x}, \mathbf{x}', \mathbf{x}'', ... \}$ that are conditionally independent given a latent factor vector $\mathbf{h}$ which is assumed to be strictly positive.  For instance, in an LDA model each data point (word token) can be represented as a canonical basis vector $x$ of dimensionality $V$, where $V$ is the vocabulary size (number of distinct terms).  The $i$-th elment of $\mathbf{x}$ is equal to 1 if the word token that it represents is observed to belong to class $i$, and 0 otherwise.  For LDA, $\mathbf{h}$ determines the mixture of topics present in a particular document. Therefore $h$ is a vector whose support is \emph{a.s.} equal to the number of nonzero topics (the model order).  

Although the sufficient statistics of LDA can be represented in other, more succinct ways, this representation turns out  to be more than a curiosity.  To see why, observe that under this representation the conditional expectation of the observed data generated by the models can be represented as a linear combination of some latent matrix $\Phi$ (known in LDA as the word-topic matrix) and the latent membership vector $\mathbf{h}$:
$$
\mathbf{E}[\mathbf{x} |\mathbf{h} ] = \Phi \mathbf{h}.
$$
For these mixture models, the principal learning problem is to estimate $\Phi$ efficiently and accurately.  Using the equation above and the conditional independence of any three distinct observed vectors $\mathbf{x}, \mathbf{x}', \mathbf{x}''$ given $\mathbf{h}$ in the LDA model, we can derive equations for the expectations of the moments of the observed data in terms of $\Phi$.  In particular, the expected first moment, which is the vector of the expected probability masses of the terms in the vocabulary, can be written as
\begin{equation} \label{firstmomenteq}
M_1:=\mathbf{E}[\mathbf{x}] = \Phi \mathbf{E}[\mathbf{h}],
\end{equation}
and the expected second moment, which is the matrix of the expected cross-correlations between any two terms in the vocabulary, can be written as
\begin{equation} \label{secondmomenteq}
M_2: = \mathbf{E}[\mathbf{x x}^{'T}] = \Phi \mathbf{E}[\mathbf{h h}^T]  \Phi^T, \mathbf{x} \neq \mathbf{x}'.
\end{equation}
Analogous expressions for even higher moments can be expressed using tensors. In fact, \citet{anandkumar_hsu_kakade_12} were able to develop fast spectral algorithms for learning the hidden parameters of mixture models from the second- and third-order moments of the data by taking advantage of this relationship.  The resulting algorithm, excess correlation analysis (ECA), comes with probabilistic guarantees of finding the optimal solution, unlike MCMC approaches.  In the case of LDA, the only user-specified inputs to the ECA spectral algorithm are the supposed number of topics $\bar{K}$ and the concentration parameter $\alpha_0$ governing the distribution of the membership vector $\mathbf{h}$.  The matrix $\Phi$ is treated as fixed, but unknown.  
\\
Note that Eqs. (\ref{firstmomenteq}) and (\ref{secondmomenteq}) demonstrate an explicit linear-algebraic relationship between the latent parameter matrix $\Phi$, the expected moments of the data $M_1$ and $M_2$, and the expected moments of $h$.  In fact, for LDA, $\boldsymbol{\alpha}: = \mathbf{E}[\boldsymbol{h}]$ is the vector that specifies the expected proportion of data points assigned to each topic across the entire data set-- roughly speaking, if $\frac{\alpha_i}{\sum^K_{k=1}\alpha_k} = 0.5$ we expect about half of the word tokens in the data set to belong to topic $i$.  Therefore, the model order is the number of nonzero topics (i.e., the support) of $\boldsymbol{\alpha}$.  In the case of LDA, some elementary computation (cf. \cite{two_svds_suffice} Thm. 4.3) demonstrates that $\alpha$ can be written as a product of $M_1$, $M_2$, and $\Phi$ as follows:

\begin{equation} \label{alphaeq}
\boldsymbol{\alpha} I = \alpha_0(\alpha_0 + 1)\Phi^{+}(M_2 - \frac{\alpha_0}{\alpha_0 + 1} M_1 M_1^T)\Phi^{+T},
\end{equation}

where $\Phi^{+}$ is the Moore-Penrose pseudoinverse of $\Phi$ and $\alpha_0 := \sum^K_{k=1}\alpha_k$.  This suggests that $\boldsymbol{\alpha}$ and therefore the number of nonzero topics can be recovered by first learning $\Phi$ and then estimating $\alpha$ according to Eq. \ref{alphaeq}.  The true number of topics $K$ is then equal to the number of $\alpha_k$ such that $\alpha_k >0$.  However, the number of latent factors $\bar{K}$ must be speciefied beforehand in ECA, since the algorithm involves a truncated matrix decomposition and a truncated tensor decomposition.  For low-dimensional data sets, it is possible to do this by setting $\bar{K} = V$.  However, the time complexity of ECA scales as $O(\bar{K}^5)$ and the space complexity scales as $O(\bar{K}^3)$ due to the storage and decomposition of the third moment tensor, \cite{tensor_decompositions_12}, so this approach is not tractable for even moderately-sized datasets.  On the other hand, it is not possible to determine with any certainty whether we have captured all the non-zero topics if we set $\bar{K}<V$ when $K$ is unknown.  This is because when $\bar{K} < K$, then ECA learns highly unstable estimates of $\Phi$, which results in incorrect estimates of $\alpha$.  For instance, consider the following toy example: set $\alpha = [.2, .3, .5]$.  Set 
$\Phi =
\left( \begin{smallmatrix}
0 & 0.8 & 0.4 \\
0.4 & 0.1 & 0.3 \\
0.3 & 0 & 0.1 \\
0.3 & 0.1 & 0.2 \\
\end{smallmatrix}\right)$.

If we try to recover the first two values of $\alpha$ from the moments by running ECA with $\bar{K}=2$, we get $\alpha_2 = 2.5 \times 10^{-5}$.  In a practical setting where a finite number of noisy samples are used to estimate the moments, one might conclude that $\alpha_2$ is noise and that there is only one topic in this model.  Similar parameter recovery problems arise when using low-rank approximations for learning spectral algorithms for other models (see \cite{kulesza_rao_singh_13} for some Hidden Markov Model examples).  Thus, iterative methods where $\bar{K}$ is increased or decreased until $\alpha_{\bar{K}} = 0$ for some $\bar{K}$ are uncertain to provide the correct result.

We suggest a novel approach in this paper, based on singular value bounds.  Observe that taking the singular values of both sides of Eq. \ref{alphaeq} yields:

\begin{align} \label{alphaineq}
\alpha_k &=  \sigma_k\left(\alpha_0(\alpha_0 + 1)\Phi^{+}(M_2 - \frac{\alpha_0}{\alpha_0 + 1} M_1 M_1^T)\Phi^{+T}\right)\notag\\
&\le \sigma_1(\Phi^{+})^2 \sigma_k(M_2 - \frac{\alpha_0}{\alpha_0 + 1} M_1 M_1^T)\notag\\
&\le \sigma_K(\Phi)^2 \sigma_k(M_2 - \frac{\alpha_0}{\alpha_0 + 1} M_1 M_1^T).
\end{align}

Thus, rather than learning $\Phi$, we need only find some reasonably sharp bound on the least singular value of $\Phi$.  If we treat the matrix $\Phi$ as a random matrix (as in standard Bayesian approaches to LDA) and place an approximate bound on the variance of the entries of $\Phi$, then $\Phi$ has very predictable spectral characteristics for reasonably large $V$.  To prove this, we must adapt some recent results from random matrix theory.  In random matrix theory, finding the least singular values of random matrices  is often referred to as resolving the so-called "hard edge" of the spectrum.  While most work on the hard edge of the spectrum has focused on settings where the matrices are square and all entries are i.i.d. with mean zero, these conditions do not hold in the case of $\Phi$ for Dirichlet mixture models such as LDA.  We use some elementary facts about Dirichlet random variables to adapt the known results to the matrices of interest in our setting. 


Note that $M_1$ and $M_2$ are not precisely known either, but it is relatively straightforward to derive estimators for them from the observed data.  These estimators can be proven to be reasonably accurate via application of standard tail bounds for the eigenvalues and singular values of random matrices.

Thus, we can show that the observed moments of the data contain enough information to reveal the number of underlying topics to arbitrary accuracy with high probability, given enough samples.  The principles behind our results can be extended to any exchangeable mixture models that can be represented as in Eqs. (\ref{firstmomenteq}) and (\ref{secondmomenteq}), though we will work with the LDA model to make our analysis concrete.

\subsection{Assumptions}
We place some further conditions on the LDA model that allow well-behaved spectral properties.  These conditions are generally equivalent to those for ECA \cite{two_svds_suffice}, with the exception of our assumptions on $\beta_0$:
\begin{itemize}
\item The matrix $\Phi$ is of full rank.  Note  that this condition follows \emph{a.s.} from the generative process described above  \cite{chafai_10}.
\item The concentration parameters $\alpha_0$ and $\beta_0$ are approximately known.  Intuitively, as $\beta_0$ increases, the topics are less distinguishable from each other.  Note that varying this assumption only affects our model by increasing the number of samples required to learn the number of topics within a certain level of accuracy.  For simplicity of presentation, our derivations below assume that the entries $\beta_i=  \beta_0/V$ for all $i = 1, ..., V$.  This is known as a \emph{symmetric} Dirichlet prior and is equivalent to a uniform distribution on the simplex \cite{bordenave_caputo_chafai_12}.  Setting a symmetric prior on $\beta$ is standard procedure in most applications of Dirichlet mixture models; for an empirical justification of this practice, see \cite{wallach_mimno_mccallum_09}.
\item In the worst case, the number of topics is equal to the size of the vocabulary, and $K = O(V)$ \emph{a.s.}.  In most applications of Dirichlet topic models, the number of topics is in the tens or hundreds, and the size of the vocabulary is in the hundreds or thousands.
\end{itemize}

Under the assumptions and generative model above, we attempt to recover the number of topics within a margin of error defined by the expected probability mass of the topics, as follows:
\begin{definition}
A topic is $\epsilon$-relevant iff the expected proportion of data points in the corpus belonging to the topic exceeds $\epsilon$.  That is, a topic is $\epsilon$-relevant iff $\frac{\alpha_i}{\alpha_0} \le \epsilon$.
\end{definition}

Our procedure, as described below, is guaranteed to find at least all $\epsilon$-relevant topics with low probability of detecting topics to which no words are assigned in the corpus.  As long as $\beta_0<\infty$, $\epsilon$ converges to 0 as the number of samples increases.  For a fixed number of samples and a fixed failure probability $\delta$, the relevance threshhold for recovered topics $\epsilon$ increases when we wish to recover less distinguishable topics (i.e., as $\beta_0$ increases).

\section{Singular Value Bounds} \label{randmatrix}
In this section we provide tail bounds on the smallest singular values of rectangular Dirichlet random matrices.  Similar results can be derived for other Markov random matrices.  These bounds closely mirror the work of \cite{tao_vu_08}, \cite{tao_vu_09}, and \cite{rudelson_vershynin_09}, and depend on probabilistic bounds on the distance between any given random vector corresponding to a column of a random matrix and the subspace spanned by the vectors corresponding to the rest of the columns. The estimation of these distances is much simplified for random vectors with independent entries, but for a Dirichlet random matrix, the entries in each column are dependent, as they must sum to one.  Fortunately Dirichlet random vectors are related to vectors with independent entries in an elementary way.
\begin{fact}\label{gamma}
Define a vector $\boldsymbol{\gamma}^{\theta} \in \mathbf{R}^K$ such that $\gamma^{\theta}_{i} \sim$ Gamma$(\beta_0/V, \theta)$ for some $\beta_0, \theta >0$ for all $i = 1, ..., V$.  Then the scaled vector $\boldsymbol{\phi} = \frac{\boldsymbol{\gamma}^{\theta}}{\sum^V_{i = 1}{\gamma^{\theta}_i}} \sim$ Dir$(\beta_0/V).$
\end{fact}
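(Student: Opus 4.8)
The plan is to prove this classical representation of the symmetric Dirichlet law by an explicit change of variables in the joint density of the independent Gamma coordinates, followed by integrating out the total mass. Since each $\gamma^{\theta}_i$ has shape $a := \beta_0/V$ and common scale $\theta$, and the coordinates are independent, their joint density factorizes; normalizing by the total sum $S := \sum_{i=1}^V \gamma^{\theta}_i$ and integrating over $S$ will recover the Dirichlet density on the simplex. (I read the dimension of $\boldsymbol{\gamma}^{\theta}$ as $V$, consistent with the index range $i = 1,\dots,V$ and the normalizing sum $\sum_{i=1}^V$, so that $\boldsymbol{\phi}$ lives on the $(V-1)$-simplex over the vocabulary.)

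First I would write down the joint density $\prod_{i=1}^V f(\gamma_i)$ on the positive orthant, where $f(\gamma) = \gamma^{a-1} e^{-\gamma/\theta} / (\Gamma(a)\theta^{a})$; the product collapses to $\frac{1}{\Gamma(a)^V \theta^{Va}} \bigl(\prod_{i=1}^V \gamma_i^{a-1}\bigr) e^{-S/\theta}$ because the exponents add to $\sum_i \gamma_i = S$. Next I would introduce the coordinates $\phi_i = \gamma_i / S$ for $i = 1,\dots,V-1$ together with $S$, so that $\gamma_i = S\phi_i$ and $\phi_V = 1 - \sum_{i<V}\phi_i$ is fixed by the simplex constraint. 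The Jacobian of the inverse map $(\phi_1,\dots,\phi_{V-1},S)\mapsto(\gamma_1,\dots,\gamma_V)$ is $S^{V-1}$. Substituting gives $\prod_i \gamma_i^{a-1} = S^{Va-V}\prod_i \phi_i^{a-1}$, and combining with the Jacobian factor $S^{V-1}$ produces $S^{Va-1}\prod_{i=1}^V \phi_i^{a-1}$, so the transformed density is $\frac{1}{\Gamma(a)^V \theta^{Va}}\, S^{Va-1} e^{-S/\theta} \prod_{i=1}^V \phi_i^{a-1}$.

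Finally I would integrate over $S\in(0,\infty)$ using the standard Gamma integral $\int_0^\infty S^{Va-1} e^{-S/\theta}\,dS = \Gamma(Va)\,\theta^{Va}$, which cancels all $\theta$-dependence and leaves precisely the symmetric Dirichlet density $\frac{\Gamma(Va)}{\Gamma(a)^V}\prod_{i=1}^V \phi_i^{a-1}$ with common parameter $a = \beta_0/V$, as claimed.

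The main thing to get right is the change of variables and its Jacobian: because the target distribution is supported on the $(V-1)$-dimensional simplex rather than on all of $\mathbf{R}^V$, I must parametrize by exactly $V-1$ free ratios plus the scalar $S$ to obtain a nondegenerate transformation, and then verify the Jacobian is $S^{V-1}$. A useful byproduct worth noting is that the transformed density factors into an $S$-only part and a $\boldsymbol{\phi}$-only part, which shows the total mass $S$ is independent of the direction $\boldsymbol{\phi}$ and confirms that the scale $\theta$ plays no role in the resulting Dirichlet law.
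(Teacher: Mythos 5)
Your proof is correct. The paper itself gives no derivation of this Fact: it is presented as a classical result, with the corollary immediately following it deferring to Section 2 and Lemma B.4 of \cite{bordenave_caputo_chafai_12}. Your change-of-variables argument is the standard self-contained proof of exactly this Gamma representation of the symmetric Dirichlet law, and your computations check out: the joint Gamma density collapses to $\frac{1}{\Gamma(a)^V\theta^{Va}}\bigl(\prod_{i}\gamma_i^{a-1}\bigr)e^{-S/\theta}$, the map $(\phi_1,\dots,\phi_{V-1},S)\mapsto(\gamma_1,\dots,\gamma_V)$ has Jacobian $S^{V-1}$ (add the first $V-1$ rows to the last to see this), the powers of $S$ combine to $S^{Va-1}$, and the integral $\int_0^\infty S^{Va-1}e^{-S/\theta}\,dS=\Gamma(Va)\,\theta^{Va}$ removes all $\theta$-dependence, leaving the symmetric Dirichlet density $\frac{\Gamma(Va)}{\Gamma(a)^V}\prod_{i=1}^V\phi_i^{a-1}$. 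Two remarks. First, you correctly resolved the typo in the statement ($\boldsymbol{\gamma}^{\theta}\in\mathbf{R}^K$ while the indices and the normalizing sum run to $V$) by taking the dimension to be $V$. Second, your closing observation that the transformed density factors into an $S$-only part and a $\boldsymbol{\phi}$-only part---so the total mass $S$ is independent of the normalized vector $\boldsymbol{\phi}$, whatever the scale $\theta$---is not incidental: it is precisely the structural property that Corollary \ref{mintheta} exploits when bounding $\sigma_K(\Phi)$ by $\sigma_K(\Gamma^{\theta})/\max_j\bigl(\sum_{i=1}^V\Gamma^{\theta}_{ij}\bigr)$, so your byproduct is the working content of this Fact in the rest of the paper.
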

\begin{corollary} \label{mintheta}  For any Dirichlet random matrix $\Phi$ with i.i.d. columns, and for the corresponding Gamma random matrix $\Gamma^{\theta}$ with indpendent entries $\Gamma^{\theta}_{ij} \sim$ Gamma$(\beta_0/V, \theta)$, we have that
$$
\sigma_{K}(\Phi) \ge {\frac{\sigma_{K}(\Gamma^{\theta})}{\max_j{(\sum^{V}_{i=1}{\Gamma^{\theta}_{ij}}})}}.
$$
\end{corollary}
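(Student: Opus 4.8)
The plan is to read off a matrix factorization directly from Fact~\ref{gamma} and then reduce the claim to a single classical singular-value inequality. Each column $\boldsymbol{\phi}^{(j)}$ of $\Phi$ is the corresponding column of $\Gamma^{\theta}$ divided by its own sum $d_j := \sum_{i=1}^V \Gamma^{\theta}_{ij}$. Collecting these $K$ scalar normalizations into the $K\times K$ diagonal matrix $D := \mathrm{diag}(d_1,\dots,d_K)$, Fact~\ref{gamma} gives $\Phi = \Gamma^{\theta} D^{-1}$, equivalently $\Gamma^{\theta} = \Phi D$. Since each $d_j$ is a finite sum of strictly positive Gamma variates, $d_j > 0$ almost surely, so $D$ is invertible and its operator norm is $\|D\| = \max_j d_j$. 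First I would state this factorization carefully, noting that $\Phi$ and $\Gamma^{\theta}$ are both $V\times K$ with $K\le V$, so each has exactly $K$ singular values and $\sigma_K$ denotes the least of them.

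With the factorization in hand, I would invoke submultiplicativity of the smallest singular value under right-multiplication, namely $\sigma_K(\Phi D) \le \sigma_K(\Phi)\,\sigma_1(D)$. Applying this to $\Gamma^{\theta} = \Phi D$ and using $\sigma_1(D) = \max_j d_j$ yields $\sigma_K(\Gamma^{\theta}) \le \sigma_K(\Phi)\,\max_j d_j$, which rearranges to the stated bound. To keep the argument self-contained I would instead derive this inequality from the variational characterization $\sigma_K(\Phi) = \min_{\|y\|=1}\|\Phi y\|$: letting $y^\star$ attain the minimum and setting $x := D^{-1} y^\star$, one has $\Gamma^{\theta} x = \Phi y^\star$ with norm exactly $\sigma_K(\Phi)$, while $\|x\| = \|D^{-1} y^\star\| \ge (\max_j d_j)^{-1}$ because the smallest diagonal entry of $D^{-1}$ in magnitude is $1/\max_j d_j$. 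Feeding this particular $x$ into $\sigma_K(\Gamma^{\theta}) \le \|\Gamma^{\theta} x\|/\|x\|$ then gives $\sigma_K(\Gamma^{\theta}) \le \sigma_K(\Phi)\,\max_j d_j$, and dividing through completes the proof.

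I do not expect a serious obstacle, since this corollary is essentially bookkeeping on top of Fact~\ref{gamma} combined with a standard inequality; its purpose is to transfer the hard-edge problem from the dependent Dirichlet entries of $\Phi$ to the independent Gamma entries of $\Gamma^{\theta}$. The only points needing care are confirming that $D$ is invertible almost surely, so that the factorization and its inverse are well defined, and tracking that $\sigma_K$ refers to the least singular value in this rectangular ($K\le V$) regime rather than to $\sigma_{\min}$ of a square matrix. The genuinely analytic work — lower-bounding $\sigma_K(\Gamma^{\theta})$ and controlling the column sums $\max_j d_j$ — is deferred to the random-matrix estimates that follow, so here it suffices to establish the deterministic reduction.
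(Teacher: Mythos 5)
Your proof is correct: the factorization $\Gamma^{\theta} = \Phi D$ with $D = \mathrm{diag}(d_1,\dots,d_K)$, combined with the variational characterization $\sigma_K(\Gamma^{\theta}) \le \|\Gamma^{\theta}x\|/\|x\|$, is exactly the column-normalization argument that the paper delegates to the citation \cite{bordenave_caputo_chafai_12} (Section 2 and Lemma B.4) rather than writing out. Your version is a sound, self-contained replacement for that citation, and it correctly handles the two points that need care: the almost-sure invertibility of $D$ and the rectangular ($K \le V$) interpretation of $\sigma_K$ as a minimum over unit vectors in $\mathbf{R}^K$.
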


\begin{proof} See \cite{bordenave_caputo_chafai_12} Section 2 and Lemma B.4.



\end{proof}

\subsection{Singular value bounds for matrices with i.i.d. entries}
The following singular value bound for square matrices follows from \citet{tao_vu_09} Corollary 4:
\begin{thm}
\label{sqmatrixbound}
Suppose $\Gamma$ is an $V \times V$ random matrix with independent, identically distributed entries with variance 1, mean $\mu < \infty$, and bounded fourth moment.  For any $\delta > 0$ there exist positive positive constants $c_1, c_2$ that depend only on $\mu$ such that
$$
\mathbf{P}(\sigma_{K}(\Gamma) \le  c_1 /V^{1+c_2}) \le  \delta V^{-c_1}
$$
\end{thm}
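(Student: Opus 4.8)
The plan is to derive the bound as a corollary of the polynomial lower bound on the least singular value of a square i.i.d.\ matrix from \citet{tao_vu_09}, after two reductions. The first reduction is purely order-theoretic. Since $\Gamma$ is $V \times V$ and $K < V$, the ordering $\sigma_1(\Gamma) \ge \sigma_2(\Gamma) \ge \cdots \ge \sigma_V(\Gamma)$ gives $\sigma_K(\Gamma) \ge \sigma_V(\Gamma)$, so $\mathbf{P}(\sigma_K(\Gamma) \le t) \le \mathbf{P}(\sigma_V(\Gamma) \le t)$ for every threshold $t$. It therefore suffices to control the smallest singular value $\sigma_V(\Gamma)$, which is exactly the ``hard edge'' quantity the Tao--Vu machinery is built to bound.

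The second reduction removes the nonzero mean, since Corollary 4 is phrased for centered entries. I would decompose $\Gamma = \mu \mathbf{1}\mathbf{1}^T + \tilde{\Gamma}$, where $\mathbf{1}$ is the all-ones vector, $\mu \mathbf{1}\mathbf{1}^T = \mathbf{E}[\Gamma]$ has rank one, and $\tilde{\Gamma}$ has i.i.d.\ entries of mean zero, variance $1$, and finite fourth central moment. Applying Weyl's inequality for singular values to $\tilde{\Gamma} = \Gamma - \mu\mathbf{1}\mathbf{1}^T$ with $\sigma_2(\mu\mathbf{1}\mathbf{1}^T) = 0$ gives $\sigma_{K+1}(\tilde{\Gamma}) \le \sigma_K(\Gamma)$, and since $K+1 \le V$ we also have $\sigma_V(\tilde{\Gamma}) \le \sigma_{K+1}(\tilde{\Gamma})$; together these yield $\sigma_K(\Gamma) \ge \sigma_V(\tilde{\Gamma})$. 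Chaining this with the first reduction reduces the entire statement to controlling $\mathbf{P}(\sigma_V(\tilde{\Gamma}) \le t)$, and $\tilde{\Gamma}$ now meets every hypothesis of Corollary 4, with the resulting constants governed by its centered fourth moment (which, for the distributions of interest here once the variance is normalized to $1$, is itself a function of $\mu$).

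With the reductions in place, I would invoke \citet{tao_vu_09} Corollary 4 in its polynomial form: the least singular value of an $n \times n$ centered i.i.d.\ matrix obeys $\sigma_n \ge n^{-B}$ outside an event of polynomially small probability, with the exponent $B$ traded off against the probability exponent. Setting $n = V$, identifying the lower-bound exponent with $1 + c_2$ and the probability exponent with $c_1$, and absorbing the numerator constant together with the free parameter $\delta$, delivers the advertised inequality $\mathbf{P}(\sigma_K(\Gamma) \le c_1/V^{1+c_2}) \le \delta V^{-c_1}$.

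The analytic content above is standard, so the real obstacle is bookkeeping rather than any new argument. I would have to transcribe Corollary 4 exactly and check that its constants can be packaged into the shared-symbol form the statement uses---in particular, clarifying how the free $\delta$ and the single constant $c_1$, reused both as the numerator constant and as the probability exponent, are meant to combine on the two sides of the bound. A secondary point to verify is that centering does not spoil the fourth-moment hypothesis, which it does not: subtracting a constant mean keeps every central moment finite and leaves the variance equal to $1$.
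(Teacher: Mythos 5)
The paper offers no actual proof of Theorem \ref{sqmatrixbound}: it is stated as following directly from Corollary 4 of \citet{tao_vu_09}, so your attempt has to be judged as a reconstruction of that reduction. Your trivial first reduction $\sigma_K(\Gamma) \ge \sigma_V(\Gamma)$ and the final invocation of a Tao--Vu polynomial hard-edge bound are in the spirit of what the paper intends, and your chain $\sigma_K(\Gamma) \ge \sigma_{K+1}(\tilde{\Gamma}) \ge \sigma_V(\tilde{\Gamma})$ via Weyl with the rank-one mean matrix $\mu \mathbf{1}\mathbf{1}^T$ is correct \emph{whenever} $K + 1 \le V$.

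The genuine gap is that this mean-removal step costs one singular-value index and therefore says nothing when $K = V$, which is the essential content of the theorem. The result is a hard-edge bound for a \emph{square} noncentered matrix, and the paper's remark immediately after it (extending to rectangular $V \times K$ matrices by Cauchy interlacing) needs precisely the $K = V$ case: interlacing for a $V \times K$ submatrix $\Phi$ of $\Gamma$ gives $\sigma_K(\Phi) \ge \sigma_V(\Gamma)$, so a bound on $\sigma_K(\Gamma)$ for $K < V$ is not what gets used. Worse, the gap cannot be repaired by linear algebra: for $K = V$ there is no deterministic comparison between $\sigma_V(\Gamma)$ and the spectrum of the centered matrix $\tilde{\Gamma}$, since the rank-one shift $\mu\mathbf{1}\mathbf{1}^T$ can make $\Gamma$ exactly singular while $\tilde{\Gamma}$ is well conditioned, so any inequality of the form $\sigma_V(\Gamma) \ge c\, \sigma_V(\tilde{\Gamma})$ is false. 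This is exactly why the noncentered (shifted) hard edge required separate work by Tao and Vu: the applicable result is the one stated for $M + N$ with $N$ centered i.i.d.\ and $M$ an \emph{arbitrary deterministic matrix of polynomially bounded norm}, applied here with $M = \mu\mathbf{1}\mathbf{1}^T$ (whose norm $\mu V$ is polynomial in $V$). The shift must be absorbed into the hypotheses of the random-matrix theorem, not stripped off beforehand; your closing claim that centering is harmless is true of the moment hypotheses but false of the singular-value comparison, which is where the difficulty of the noncentered case actually lives. Your secondary observations --- that the constants should really depend on the entry distribution (e.g.\ its fourth moment) rather than on $\mu$ alone, and that the double role of $c_1$ and the free $\delta$ in the stated bound need clarification --- are fair criticisms of the statement itself rather than gaps in your argument.
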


Though this bound applies also to rectangular matrices (i.e., cases where the number of topics grows more slowly than $V$) by the Cauchy Interlacing Theorem of singular values (cf. \cite{horn_johnson_90}), this bound is not sharp when $K \ll V$.  The following result follows from adapting the arguments in \citet{tao_vu_krishnapur_10} Section 8:

\begin{thm}\label{vershynin}
Let $V > K$ be positive integers.  Suppose $\Gamma$ is an $V \times K$ Gamma random matrix with independent, identically distributed entries with variance 1, mean $\mu < \infty$.  Then for every $\delta >0$ there exist $c_1$ and $V_0$ that depend only on the moments of $\Gamma$ such that, for all $V \ge V_0$.

$$
\mathbf{P}(\sigma_{K}(\Gamma) \le  c_1 \sqrt{\frac{V -K}{K}}) \le \delta'
$$
\end{thm}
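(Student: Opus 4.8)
The plan is to bound $\sigma_K(\Gamma)$ from below by controlling, for each column of $\Gamma$, its distance to the span of the remaining columns, which is the standard geometric route for least singular values of tall matrices. Write $\gamma_1,\dots,\gamma_K\in\mathbb{R}^V$ for the columns of $\Gamma$ and let $W_j:=\mathrm{span}\{\gamma_i: i\neq j\}$. The first step is a purely deterministic inequality:
\[
\sigma_K(\Gamma)=\min_{\|x\|=1}\|\Gamma x\|\ge \frac{1}{\sqrt{K}}\min_{1\le j\le K} dist(\gamma_j, W_j).
\]
I would prove this by taking a unit vector $x$ achieving the minimum, choosing the coordinate $j^\ast$ with $|x_{j^\ast}|\ge 1/\sqrt{K}$, and projecting $\Gamma x=\sum_i x_i\gamma_i$ onto $W_{j^\ast}^{\perp}$: every term with $i\neq j^\ast$ lies in $W_{j^\ast}$ and is annihilated, leaving $\|\Gamma x\|\ge |x_{j^\ast}|\,\|P_{W_{j^\ast}^{\perp}}\gamma_{j^\ast}\|=|x_{j^\ast}|\,dist(\gamma_{j^\ast},W_{j^\ast})$. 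This reduces the problem to a lower bound of order $\sqrt{V-K}$ on each of the $K$ column distances, since $\sqrt{V-K}/\sqrt{K}$ is precisely the target rate.

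Next I would estimate a single distance probabilistically. Because the columns are independent, I condition on $\{\gamma_i: i\neq j\}$, which fixes the subspace $W_j$ of dimension at most $K-1$; conditionally, $\gamma_j$ is still an i.i.d. vector independent of $W_j$, and $dist(\gamma_j,W_j)^2=\|P\gamma_j\|^2$, where $P:=P_{W_j^{\perp}}$ is an orthogonal projection of rank $r\ge V-K+1$. To accommodate the nonzero mean I split $\gamma_j=\mu\mathbf{1}+\xi_j$, with $\mathbf{1}$ the all-ones vector and $\xi_j$ mean-zero, i.i.d., variance $1$, giving
\[
\|P\gamma_j\|^2=\mu^2\|P\mathbf{1}\|^2+2\mu(P\mathbf{1})^{T}\xi_j+\xi_j^{T}P\xi_j .
\]
Since $\mathbb{E}[\xi_j^{T}P\xi_j\mid W_j]=\mathrm{tr}(P)=r\ge V-K+1$ and the deterministic term $\mu^2\|P\mathbf{1}\|^2\ge 0$ can only help, the conditional mean of $\|P\gamma_j\|^2$ is at least $V-K+1$, and the nonzero mean is confined to the controllable rank-one $\mathbf{1}$-direction.

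Finally I would apply lower-tail concentration to the two random pieces. The quadratic form $\xi_j^{T}P\xi_j$ concentrates about $\mathrm{tr}(P)$ by a Hanson--Wright-type bound with Frobenius parameter $\|P\|_F^2=\mathrm{tr}(P)=r$ and spectral parameter $\|P\|=1$, giving failure probability of order $\exp(-c(V-K))$ for a deviation of size $\tfrac12(V-K)$; the linear form $2\mu(P\mathbf{1})^{T}\xi_j$ concentrates by Bernstein's inequality and is of lower order whenever $V-K=\Theta(V)$. Combining, $dist(\gamma_j,W_j)\ge \tfrac12\sqrt{V-K}$ with conditional probability at least $1-Ce^{-c(V-K)}$, hence unconditionally. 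A union bound over the $K$ columns costs a factor $K$, and since $K=O(V)$ forces $V-K=\Theta(V)$, the quantity $K e^{-c(V-K)}$ drops below any fixed $\delta'$ once $V\ge V_0$. Substituting into the deterministic inequality yields $\sigma_K(\Gamma)\ge c_1\sqrt{(V-K)/K}$ with probability at least $1-\delta'$. The main obstacle is exactly this lower-tail concentration of $\|P\gamma_j\|^2$: Chebyshev's inequality is too weak to survive the union bound over $K\asymp V$ columns, so one genuinely needs exponential concentration, which requires controlling the sub-exponential tails of the (rescaled) Gamma entries rather than merely their variance; the nonzero mean $\mu$, handled via the $\mathbf{1}$-direction above, is the secondary subtlety.
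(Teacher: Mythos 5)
Your proof shares the paper's architecture---reduce $\sigma_K(\Gamma)$ to the column-to-span distances $dist(\gamma_j,W_j)$, prove a per-column lower tail bound at scale $\sqrt{V-K}$, then union bound over the $K$ columns---but implements both halves differently. Your deterministic reduction (choose $j^\ast$ with $|x_{j^\ast}|\ge 1/\sqrt{K}$ and project onto $W_{j^\ast}^\perp$) is correct, and it is an elementary substitute for the paper's Negative Second Moment identity (Lemma \ref{negsecnd}); both deliver exactly $\sigma_K(\Gamma)\ge K^{-1/2}\min_j dist(\gamma_j,W_j)$. The real divergence is probabilistic: the paper quotes the Rudelson--Vershynin distance tail bound (Proposition \ref{distancetail}) as a black box, while you derive a distance bound directly from Hanson--Wright and Bernstein after splitting off the mean in the $\mathbf{1}$-direction.

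That substitution is where the genuine gap lies. Your per-column failure probability is $Ce^{-c(V-K)}$, and you rescue the union bound over $K$ columns by asserting that $K=O(V)$ forces $V-K=\Theta(V)$. That implication is false ($K=V-1$ satisfies $K=O(V)$), the theorem is stated for every $K<V$, and the paper actually needs the near-square regime (e.g., the $K\approx V$ branch of the bound $\|\Phi^{+}\|\le cV\sqrt{\beta_0}$ that follows Theorem \ref{dirichletbound}). When $V-K=O(1)$, your per-column bound $e^{-c(V-K)}$ is a fixed constant, so after multiplying by $K\asymp V$ columns the bound is vacuous, and no choice of absolute constants repairs it. Note the different shape of Proposition \ref{distancetail}: its failure probability $(C\tilde{c}_1)^{V-K}+e^{-cV}$ has one term exponentially small in $V$ (not in $V-K$) and one term tunable by shrinking $\tilde{c}_1$; that structure---obtained via small-ball/least-common-denominator machinery rather than quadratic-form concentration---is precisely what can survive a small gap $V-K$, and indeed at $V-K=1$ your claim amounts to the (hard) square-matrix least-singular-value theorem. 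A second, compounding issue: standard Hanson--Wright assumes sub-gaussian entries, but Gamma entries are only sub-exponential, so the exponent degrades (roughly to $e^{-c\sqrt{V-K}}$ at deviation $(V-K)/2$); you flag this obstacle but leave it unresolved, and it further strains the union bound even for moderately rectangular matrices. In the proportional regime $V-K\ge\eta V$ your sketch is sound (and arguably more self-contained than the paper's citation-based proof), but as written it does not prove the theorem in the stated generality.
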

In order to prove Theorem \ref{vershynin}, we need two results, presented here without proof:
\begin{proposition} \label{distancetail}
(Distance Tail Bound; \cite{rudelson_vershynin_09} Thm. 4.1).  Let $\Gamma_j$ be a vector in $\mathbf{R}^V$ whose coordinates are independent and identically distributed random variables with unit variance and bounded fourth moment.  Let $W$ be a random subspace in $\mathbf{R}^V$ spanned by $K$ vectors,  whose coordinates are independent and identically distributed random variables with bounded fourth moment and unit variance, independent of $\Gamma_j$. Then for every $\tilde{c}_1 >0$, we have $C,c, \tilde{c}_0$ that depend only on the moments such that
$$
\mathbf{P}(dist(X, W) < \tilde{c}_1 \sqrt{V - K}) \le (C \tilde{c}_1)^{V-K} + \exp(-cV).
$$
\end{proposition}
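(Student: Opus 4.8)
The plan is to reduce the distance to the norm of a random projection and then establish a small-ball (anti-concentration) estimate for that projection, using the independence of $\Gamma_j$ and $W$ to freeze the geometry before confronting the randomness of $\Gamma_j$. Since $W$ is spanned by $K$ vectors, its orthogonal complement $W^\perp$ has dimension $d := V - K$, and $\mathrm{dist}(\Gamma_j, W) = \|P\Gamma_j\|$, where $P$ is the orthogonal projection onto $W^\perp$. Because $\Gamma_j$ is independent of the vectors spanning $W$, I would first condition on $W$; then $P$ becomes a fixed rank-$d$ projection and the goal reduces to bounding $\mathbf{P}\big(\|P\Gamma_j\| < \tilde{c}_1\sqrt{d} \,\big|\, W\big)$ for a vector with i.i.d. coordinates of unit variance.

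Next I would extract a one-dimensional anti-concentration estimate from the moment hypotheses. Unit variance together with a bounded fourth moment prevents each coordinate $\Gamma_{j,i}$ from concentrating at any single point: by the Paley--Zygmund inequality one obtains constants $\rho, p > 0$, depending only on the second and fourth moments, with $\sup_{t\in\mathbf{R}} \mathbf{P}(|\Gamma_{j,i} - t| \le \rho) \le 1 - p$ for every $i$. Writing $P = UU^T$ for a $V \times d$ matrix $U$ with orthonormal columns, we have $\|P\Gamma_j\|^2 = \|U^T\Gamma_j\|^2$, and $U^T\Gamma_j$ is an isotropic vector in $\mathbf{R}^d$ obtained as a linear image of the independent-coordinate vector $\Gamma_j$.

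The exponential-in-$d$ decay then comes from tensorizing this coordinatewise estimate across the independent entries of $\Gamma_j$. Using the L\'evy concentration (small-ball) machinery for linear images of independent random variables, one propagates the single-scale bound $1-p$ into $\mathbf{P}(\|U^T\Gamma_j\| < \tilde{c}_1\sqrt{d} \mid W) \le (C\tilde{c}_1)^d = (C\tilde{c}_1)^{V-K}$, provided the projection $P$ is \emph{spread}, i.e. $U$ carries no sparse or degenerate structure that would collapse the effective dimension. This is exactly where the second term enters: a random subspace $W$ spanned by $K$ vectors with i.i.d. unit-variance, bounded-fourth-moment coordinates is in general position with overwhelming probability, so that $P_{W^\perp}$ has uniformly bounded leverage scores; the exceptional event that $W$ degenerates (equivalently, that the $V \times K$ matrix whose columns span $W$ has an atypically small singular value) has probability at most $\exp(-cV)$ by a standard concentration estimate. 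Intersecting with the good event and adding the failure probability of the bad event yields $(C\tilde{c}_1)^{V-K} + \exp(-cV)$, which is the claimed bound.

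I expect the main obstacle to be the small-ball inequality for $\|U^T\Gamma_j\|$ under only a fourth-moment assumption. The coordinates of $U^T\Gamma_j$ are \emph{not} independent---they are shared linear functionals of the same $\Gamma_j$---so the tensorization must be carried out over the coordinates of $\Gamma_j$ rather than over the projected directions, and controlling how $U$ mixes those coordinates (so that the fluctuations genuinely spread across order-$d$ degrees of freedom) is the delicate part. The heavy-tailed regime permitted by a bare fourth-moment bound, as opposed to sub-Gaussian tails, is what forces the genericity bookkeeping that produces the $\exp(-cV)$ error term.
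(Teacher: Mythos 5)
The paper itself gives no proof of this proposition: it is imported verbatim (as the statement notes) from Rudelson and Vershynin's Theorem 4.1, so your attempt has to be measured against that proof. Your skeleton does match its architecture: write $\mathrm{dist}(\Gamma_j, W)=\|P_{W^{\perp}}\Gamma_j\|$, condition on $W$ so the projection is fixed, prove a small-ball estimate for the projected norm with exponent $d=V-K$, and absorb an exceptional event for the random subspace into the $\exp(-cV)$ term. The Paley--Zygmund step extracting a coordinatewise anti-concentration bound from unit variance plus bounded fourth moment is also exactly how the real proof begins, and you correctly identify that tensorization must run over the independent coordinates of $\Gamma_j$ rather than over the dependent coordinates of $U^{T}\Gamma_j$.

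The genuine gap is in what you take the good event for $W$ to be. Bounded leverage scores, or the spanning matrix having no atypically small singular value, is a purely \emph{metric} genericity condition, and it is not sufficient for the bound $(C\tilde{c}_1)^{V-K}$ at all scales $\tilde{c}_1>0$. The hypotheses permit discrete coordinate laws (e.g.\ Rademacher), for which anti-concentration of $\langle \Gamma_j, e\rangle$ is governed by the \emph{arithmetic} structure of the normal directions: a unit normal such as $e=(1,\dots,1)/\sqrt{V}$ has perfectly flat leverage yet least common denominator (LCD) of order $\sqrt{V}$, so the concentration function of $\langle \Gamma_j,e\rangle$ plateaus near $1/\sqrt{V}$ for all scales below $1/\mathrm{LCD}(e)$. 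Since the proposition quantifies over \emph{every} $\tilde{c}_1>0$ and allows $d=V-K=O(1)$, the claimed bound forces the event that $W^{\perp}$ carries such arithmetic structure to have probability $\exp(-cV)$, and establishing this is the heart of the cited proof, not a standard singular-value concentration estimate. Concretely, Rudelson--Vershynin prove (i) a small-ball theorem for $\|P_E\Gamma_j\|$ over a \emph{fixed} subspace $E$ via Esseen's inequality, factoring the characteristic function over the independent coordinates of $\Gamma_j$ and controlling the Fourier integral by the LCD of $E$, and (ii) a structure theorem showing that the random subspace $E=W^{\perp}$ has exponentially large LCD except with probability $\exp(-cV)$. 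The ``L\'evy concentration machinery for linear images'' you invoke, in the generality needed here (arbitrary fixed projection, only fourth moments, exponent exactly $V-K$), is not an off-the-shelf tool---it essentially \emph{is} this theorem (and under bare fourth moments, rather than the subgaussian tails Rudelson--Vershynin assume, one in fact needs the Tao--Vu--Krishnapur variant alluded to in the paper's remark). So your outline correctly locates the difficulty but substitutes the wrong non-degeneracy condition at the decisive step, and the argument as proposed would not close.
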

\begin{remark}Although this result is stated for mean zero random variables, see discussion in e.g., \citet{tao_vu_krishnapur_10}, Prop. 5.1 for discussion as to why it can be extended to noncentered random variables.
\end{remark}
\begin{lemma}\label{negsecnd}
(Negative Second Moment; \cite{tao_vu_krishnapur_10} Lemma A.4).  Let $1 \le K \le V$ and let $\Gamma$ be a full rank $V \times K$ matrix with columns $\Gamma_1, ..., \Gamma_K \in \mathbf{R}^V$.  For each $1 \le i \le K$, let $W_i$ be the hyperplane generated by the $K-1$ remaining columns of $\Gamma$.  Then
$$
\sum^{K}_{j = 1}{\sigma_j(\Gamma)^{-2}} = \sum^{K}_{j = 1}{dist(\Gamma_{j}, W_j)}^{-2}
$$
\end{lemma}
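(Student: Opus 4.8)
The plan is to route the identity through the Moore--Penrose pseudoinverse $\Gamma^+ = (\Gamma^T\Gamma)^{-1}\Gamma^T$, which is well-defined because $\Gamma$ has full column rank. First I would rewrite the left-hand side: since $\Gamma^T\Gamma$ is symmetric positive definite with eigenvalues $\sigma_1(\Gamma)^2, \ldots, \sigma_K(\Gamma)^2$, its inverse has eigenvalues $\sigma_j(\Gamma)^{-2}$, so
$$
\sum_{j=1}^K \sigma_j(\Gamma)^{-2} = \mathrm{tr}\!\left((\Gamma^T\Gamma)^{-1}\right) = \mathrm{tr}\!\left(\Gamma^+ (\Gamma^+)^T\right) = \|\Gamma^+\|_F^2,
$$
where the middle equality uses $\Gamma^+(\Gamma^+)^T = (\Gamma^T\Gamma)^{-1}\Gamma^T\Gamma(\Gamma^T\Gamma)^{-1} = (\Gamma^T\Gamma)^{-1}$. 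Since the squared Frobenius norm equals the sum of squared row norms, it then suffices to show that the $j$-th row $\mathbf{r}_j$ of $\Gamma^+$ satisfies $\|\mathbf{r}_j\|^2 = dist(\Gamma_j, W_j)^{-2}$, which reduces the global identity to a termwise one.

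The key step is to identify $\mathbf{r}_j$ geometrically as the dual-basis vector to $\Gamma_j$. From $\Gamma^+\Gamma = I_K$ I read off that $\mathbf{r}_j \cdot \Gamma_i = \delta_{ij}$: thus $\mathbf{r}_j \perp \Gamma_i$ for every $i \ne j$, so $\mathbf{r}_j$ is orthogonal to $W_j = \mathrm{span}\{\Gamma_i : i \ne j\}$, while $\mathbf{r}_j \cdot \Gamma_j = 1$. Moreover, each row of $\Gamma^+ = (\Gamma^T\Gamma)^{-1}\Gamma^T$ is a linear combination of the rows of $\Gamma^T$, i.e. of the columns $\Gamma_1, \ldots, \Gamma_K$, so $\mathbf{r}_j$ lies in the column space $\mathrm{Col}(\Gamma)$. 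Now write $\Gamma_j = \mathbf{p}_j + \mathbf{u}_j$, with $\mathbf{p}_j$ the orthogonal projection of $\Gamma_j$ onto $W_j$ and $\mathbf{u}_j := \Gamma_j - \mathbf{p}_j \perp W_j$, so that $\|\mathbf{u}_j\| = dist(\Gamma_j, W_j)$; full rank guarantees $\mathbf{u}_j \ne 0$ since $\Gamma_j \notin W_j$. The orthogonal complement of $W_j$ inside the $K$-dimensional space $\mathrm{Col}(\Gamma)$ is the single line spanned by $\mathbf{u}_j$; since $\mathbf{r}_j \in \mathrm{Col}(\Gamma)$ and $\mathbf{r}_j \perp W_j$, it must be a scalar multiple $\mathbf{r}_j = c\,\mathbf{u}_j$.

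To finish I would pin down $c$ from the normalization $\mathbf{r}_j \cdot \Gamma_j = 1$: using $\mathbf{u}_j \perp \mathbf{p}_j$,
$$
1 = \mathbf{r}_j \cdot \Gamma_j = c\,\mathbf{u}_j \cdot (\mathbf{p}_j + \mathbf{u}_j) = c\,\|\mathbf{u}_j\|^2,
$$
whence $c = \|\mathbf{u}_j\|^{-2}$ and $\|\mathbf{r}_j\| = |c|\,\|\mathbf{u}_j\| = \|\mathbf{u}_j\|^{-1} = dist(\Gamma_j, W_j)^{-1}$. Squaring and summing over $j$ combines with the Frobenius computation of the first paragraph to give the claim. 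I expect the only genuinely delicate point to be the dual-basis identification of $\mathbf{r}_j$ in the second paragraph, namely verifying simultaneously that $\mathbf{r}_j \in \mathrm{Col}(\Gamma)$ and $\mathbf{r}_j \perp W_j$ so that one-dimensionality of $\mathrm{Col}(\Gamma) \cap W_j^\perp$ forces proportionality to $\mathbf{u}_j$; the trace rewriting and the final scalar computation are routine. Full rank is used exactly here, to guarantee $\dim \mathrm{Col}(\Gamma) = K$ and $\dim W_j = K-1$.
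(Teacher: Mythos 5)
Your proof is correct. Note that the paper does not prove this lemma at all: it is stated explicitly ``without proof,'' with a citation to \citet{tao_vu_krishnapur_10}, Lemma A.4, so your argument supplies a derivation the paper omits rather than competing with one it contains. Substantively, your route agrees with the standard proof in its first half --- the identity $\sum_j \sigma_j(\Gamma)^{-2} = \mathrm{tr}\bigl((\Gamma^T\Gamma)^{-1}\bigr)$ --- but differs in how the termwise fact is verified: proofs in the literature typically show that the $j$-th diagonal entry of $(\Gamma^T\Gamma)^{-1}$ equals $dist(\Gamma_j, W_j)^{-2}$ via Cramer's rule together with the base-times-height determinant identity, which expresses $\det(\Gamma^T\Gamma)$ as $dist(\Gamma_j, W_j)^2$ times the Gram determinant of the remaining $K-1$ columns. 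You instead identify the rows $\mathbf{r}_j$ of $\Gamma^+$ as the dual basis of the columns of $\Gamma$ and compute their norms geometrically; the two are equivalent, since the $(j,j)$ entry of $(\Gamma^T\Gamma)^{-1} = \Gamma^+(\Gamma^+)^T$ is exactly $\|\mathbf{r}_j\|^2$, but your version buys a coordinate-free argument with no determinants. The point you flag as delicate is handled correctly: $\mathbf{r}_j \in \mathrm{Col}(\Gamma)$ because the rows of $(\Gamma^T\Gamma)^{-1}\Gamma^T$ are combinations of the columns of $\Gamma$, full column rank gives both $\Gamma_j \notin W_j$ (so $\mathbf{u}_j \neq 0$) and $\dim\bigl(\mathrm{Col}(\Gamma) \cap W_j^{\perp}\bigr) = 1$, forcing $\mathbf{r}_j = c\,\mathbf{u}_j$, and the normalization $\mathbf{r}_j \cdot \Gamma_j = 1$ pins down $c = \|\mathbf{u}_j\|^{-2}$, giving $\|\mathbf{r}_j\| = dist(\Gamma_j, W_j)^{-1}$ as required.
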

Now we prove Theorem \ref{vershynin}.
\begin{proof}
Proof of Theorem \ref{vershynin}.   Let $\Gamma$ be a $V \times K$ random matrix as above.
By Lemma \ref{negsecnd}, we have that
\begin{equation}
\label{sumsigmas}
{\sigma_1(\Gamma)}^{-2} + ... + {{\sigma}_{K}(\Gamma)}^{-2} = \sum^{K}_{j = 1}{dist(\Gamma_{j}, W_j)}^{-2}.
\end{equation}
By Proposition \ref{distancetail} and the union bound, \emph{w.p.} $1 -  (CV\tilde{c}_1)^{V-K} + \exp(-cV)$ we have $dist(\Gamma_{j}, W_j) \ge \tilde{c}_1 \sqrt{V - K}$ for all $j$.  Thus, with this probability, the right-hand side of Eq. (\ref{sumsigmas}) is less than $\frac{K}{\tilde{c}_1^2 (V - K)}$.  On the other hand, as the $\sigma_j(\Gamma)$ are ordered decreasingly, the left-hand side of Eq. (\ref{sumsigmas}) is at least ${\sigma_{K}(\Gamma)}^{-2}.$
It follows that, \emph{w.p.} $1 - (CV\tilde{c}_1)^{V-K} + \exp(-cV)$,
$$
\sigma_{K}(\Gamma) \ge  c_1 \sqrt{\frac{ V - K}{ K}} 
$$
thus completing the proof.
\end{proof}

\subsection{Singular value bounds for Dirichlet random matrices}
Now we are ready to derive a singular value bound for Dirichlet random matrices.
\begin{thm} \label{dirichletbound}
Let $\Phi$ be a random $V \times K$ matrix whose columns are independent identically distributed random vectors drawn from a symmetric Dirichlet distribution with parameter vector with concentration parameter $\beta_0$.  Then for every $\delta' >0$ there exist $c_1$ and $V$ that depend only on the moments of $\Phi$ such that, for all $V \ge V_0$.:

$$
 \mathbf{P}\left(\sigma_{K}(\Phi ) \le c_0 \sqrt{\frac{V -K }{\beta_0 VK}}\right) \le \delta'.
$$ 

\end{thm}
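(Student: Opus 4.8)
The plan is to reduce the Dirichlet singular value bound to the Gamma case already handled by Theorem \ref{vershynin}, using Fact \ref{gamma} and Corollary \ref{mintheta}. Writing each column of $\Phi$ as a normalized Gamma vector, we have $\Phi = \Gamma D^{-1}$, where $\Gamma$ is the $V\times K$ matrix with i.i.d. entries $\Gamma_{ij}\sim \text{Gamma}(\beta_0/V, 1)$ (we may fix the scale $\theta = 1$, since the ratio defining the Dirichlet is scale-invariant) and $D = \text{diag}(S_1,\dots,S_K)$ collects the column sums $S_j = \sum_{i=1}^V \Gamma_{ij}$. Corollary \ref{mintheta} then gives $\sigma_K(\Phi)\ge \sigma_K(\Gamma)/\max_j S_j$, so it suffices to lower-bound $\sigma_K(\Gamma)$ and upper-bound $\max_j S_j$, each on an event of probability at least $1 - \delta'/2$, and then intersect.

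For the numerator I would standardize: the entries of $\Gamma$ have variance $\beta_0/V$, so $\sqrt{V/\beta_0}\,\Gamma$ has i.i.d. entries of unit variance and finite (though $V$-dependent) mean, to which Theorem \ref{vershynin} applies. This yields $\sigma_K(\Gamma)\ge c_1\sqrt{\beta_0/V}\,\sqrt{(V-K)/K}$ with probability at least $1-\delta'/2$ for all $V\ge V_0$. For the denominator, note that $S_j$ is a sum of $V$ independent $\text{Gamma}(\beta_0/V,1)$ variables and is therefore exactly $\text{Gamma}(\beta_0,1)$ — a distribution that does not depend on $V$, with mean $\beta_0$ and a sub-exponential upper tail. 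A Chernoff/sub-gamma tail bound together with a union bound over the $K$ columns shows $\max_j S_j \le C\beta_0$ with probability at least $1-\delta'/2$, the lower-order $\ln(K/\delta')$ deviation being absorbed into the constant once $V$ (and hence the admissible regime for $\beta_0$) is large enough.

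Combining the two bounds on their common event of probability at least $1-\delta'$ gives
\[
\sigma_K(\Phi)\ \ge\ \frac{c_1\sqrt{\beta_0/V}\,\sqrt{(V-K)/K}}{C\beta_0}\ =\ c_0\sqrt{\frac{V-K}{\beta_0 V K}},
\]
with $c_0 = c_1/C$ and $V_0$ chosen large enough for both steps. The $1/\sqrt{\beta_0}$ dependence, which correctly reflects that larger $\beta_0$ makes topics less distinguishable and hence $\Phi$ more ill-conditioned, emerges from the competition between the variance of the Gamma entries (growing like $\beta_0/V$, pushing $\sigma_K(\Gamma)$ up) and the normalization by column sums of size $\approx \beta_0$ (pushing $\sigma_K(\Phi)$ down), the net effect being an effective per-entry scale of order $1/\sqrt{\beta_0 V}$, exactly the standard deviation of a symmetric Dirichlet coordinate.

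The main obstacle is the application of Theorem \ref{vershynin} (and, underneath it, the distance tail bound of Proposition \ref{distancetail}) to $\sqrt{V/\beta_0}\,\Gamma$: although these entries have unit variance, their shape parameter $\beta_0/V\to 0$ makes them increasingly skewed, and their standardized fourth moment is $3 + 6V/\beta_0$, which grows with $V$. Since the constants in those results depend on the moments, I must argue that they can nonetheless be taken uniform in $V$ for $V\ge V_0$ — either by exploiting the explicit structure of $\text{Gamma}(\beta_0/V,\cdot)$ coordinates in the distance computation, or by invoking the noncentered extension noted after Proposition \ref{distancetail} in a way that does not degrade with the growing fourth moment. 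A secondary point to handle carefully is the regime of $\beta_0$: the clean $O(\beta_0)$ control of $\max_j S_j$ (rather than $O(\ln K)$) is where the stated $\beta_0$-dependence is tightest, and the edge case of very small $\beta_0$ is what the ``for $V$ large enough'' and ``constants depending on the moments'' qualifiers absorb.
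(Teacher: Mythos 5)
Your proposal follows essentially the same route as the paper's proof: reduce to a unit-variance Gamma matrix via Corollary \ref{mintheta} (the paper fixes the scale $\bar{\theta} = \sqrt{V/\beta_0}$ directly, which is identical to your rescaling of the $\theta = 1$ matrix), lower-bound $\sigma_K(\Gamma^{\bar{\theta}})$ by Theorem \ref{vershynin}, upper-bound the maximal column sum by a concentration bound plus a union bound over the $K$ columns (the paper uses Chebyshev where you use a Chernoff/sub-exponential tail --- an immaterial difference), and intersect the two events. The obstacle you flag at the end is genuine but is equally present in the paper's own proof, which invokes Theorem \ref{vershynin} after checking only that the entries have unit variance and finite mean, silently passing over the fact that the standardized fourth moment $3 + 6V/\beta_0$ of a Gamma$(\beta_0/V, \cdot)$ entry is unbounded as $V \to \infty$ for fixed $\beta_0$, so the ``constants depending only on the moments'' are uniform in $V$ only in regimes such as $\beta_0 = \Theta(V)$ (fixed per-coordinate parameter $\beta_i$), which is the regime the paper's assumptions and experiments actually operate in.
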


\begin{proof}
 Recall that for a symmetric Dirichlet distribution with concentration parameter $\beta_0$, each entry of the $V$-dimensional vector drawn from this distribution has mean $\beta_0/V$.  Fix $\bar{\theta}:= \sqrt{V/\beta_0}$.  Observe that $\Gamma^{\bar{\theta}}_{ij}$ has variance 1 and mean $\sqrt{\beta_0/V} < \infty$ for all $i,j$. 
Therefore, by corollary \ref{mintheta}, it follows that for any $c>0$,
\begin{align*}
\mathbf{P}(\sigma_{K}(\Phi) \le c )\le \mathbf{P}({\frac{\sigma_{K}(\Gamma^{\bar{\theta}})}{\max_j{(\sum^{V}_{i=1}{\Gamma^{\bar{\theta}}_{ij}}})}} \le c).
\end{align*}

We can exploit elementary tail bounds to control the sum in the denominator on the right-hand side above using standard concentration-of-measure results.  For instance, by Chebyshev's inequality and the mutual independence of the $K$ columns of $\Gamma^{\bar{\theta}}$, for any $u > 0$,

\begin{equation}
\label{chernoffsum}
\mathbf{P}\left(\max_j(\sum_{i=1}^{V} \Gamma^{\theta}_{ij}) \le (u + 1)\sqrt{\beta_0 V} \right) \ge (1- \frac{1}{1 + u^2 \beta_0^2})^K.
\end{equation}

By the union bound and the application of Equation \ref{chernoffsum} and Theorem \ref{vershynin}, this implies that

\begin{align*}
\mathbf{P}&\left(\sigma_{K}(\Phi) \le  {c_1(V - K )}{(u + 1)\sqrt{\beta_0 VK}}\right)
\\
&\le \mathbf{P}\left(\sigma_{K}(\Gamma^{\bar{\theta}}) \le c_1\frac{V - K }{\sqrt{V + K}}\right) 
\\
&\ \ + \mathbf{P}\left(\max_j\sum^V_{i=1}{\Gamma^{\bar{\theta}}_{i,j}} \ge (u + 1)\sqrt{\beta_0 V}\right)
\\
&\le \delta + (1 - (1 - \frac{1}{u^2\beta_0})^K).
\end{align*}
We can make the second term on the right-hand side arbitrarily small by increasing $u$, and for a fixed $u$ we can make the first term on the right-hand side arbitrarily small by decreasing $c_1$ for $V$ large enough.  Therefore, we can find a $c_0>0$ for any $\delta' >0$ such that for all $V$ large enough,
\begin{align*}
\mathbf{P}\left(\sigma_{K}(\Phi) \le c_0\sqrt{\frac{V - K}{\beta_0 VK }}\right) \le \delta'.
\end{align*}
\end{proof}

From the theorem above, we can deduce that, with high probability,

\begin{align}
\| \Phi^{+}\| = 1/\sigma_{K}(\Phi) &\le c \frac{\sqrt{ \beta_0 V K }}{V - K}   \text{, when $K \ll V$}
\\
&\le c V \sqrt{\beta_0} \text{, when $K \approx V$}.
\end{align}

\subsection{Sample concentration lemmas}
We are able to bound the error in estimating $\alpha$ from a sample thanks to sample concentration lemmas for singular values that are analogoous to more well-known concentration lemmas for scalar random variables (e.g., Markov's inequality).



\begin{lemma}\label{alpha_concentration}
\end{lemma}
(TO BE FINISHED)

\section{Applications and Experiments} \label{algorithm}
\subsection{Topic number estimation}
Although we are unable to  compute the estimator $\hat{\boldsymbol{\alpha}} = \Phi^{+}(\hat{M}_2 - \frac{\alpha_0}{\alpha_0 + 1}\hat{M}_1 \hat{M}_1^T)\Phi^{+T}$ without knowledge of $\Phi$, we can use Theorem \ref{dirichletbound} to provide an upper bound for the elements $\hat{\boldsymbol{\alpha}}$.

Define $\tilde{\alpha}_k:= c \beta_0 \frac{ V  K }{(V - K)} \sigma_k(\hat{M}_2 - \frac{\alpha_0}{\alpha_0 + 1}\hat{M}_1 \hat{M}_1^T)$.  We can now apply Theorem \ref{dirichletbound} to Eq \ref{alphaineq} to infer that there is a constant $c$ such that, for $V$ large enough,
\begin{align}
\label{alpha_ineq}
\hat{\alpha}_k &\le \sigma_K(\Phi^{+})^2 \sigma_k(\hat{M}_2 - \frac{\alpha_0}{\alpha_0 + 1}\hat{M}_1 \hat{M}_1^T)  \notag
\\
&\le \tilde{\alpha}_k
\end{align}
with high probability $1 - \delta'$ that depends on $c$ (the constant $c$ can be chosen arbitrarily so that the probability $\delta'$ is negligible \footnote{For most applications, we recommend $c \approx 1$.  We computed the least singular value for $10^7$ randomly generated Dirichlet random matrices with $\beta_0 \in (0.1V, 10V)$ $K/V \in [0.1, 0.9]$ and $V \in \{1000\}$; all of these matrices were dominated by $c = 1$.} ).

This suggests the following procedure to estimate the number of topics:

\begin{algorithmic}[1]
\REQUIRE $N$, hyperparameters $\alpha_0$, $\beta_0$, error tolerance $(\epsilon, \delta)$ according to $\delta$.
\STATE Compute the term-document matrix $C$, where $C^{(\ell)}$ represents the count vector for document $\ell$.  
\STATE Compute 'plug-in' estimates of the first and second moments of the data (\cite{tensor_decompositions_12} Section 6.1):
\begin{itemize}
\item $\hat{M}_1 \leftarrow \frac{1}{D}\sum^D_{\ell = 1} \frac{C_{\ell}}{\sum^m_{i = 1} C_{\ell, i}}$
\item $\hat{M}_2  \leftarrow \frac{1}{D} \sum^D_{\ell = 1} \frac{{C_{\ell} C_{\ell}^T - diag(C_{\ell})}}{(\sum^m_{i = 1} C_{\ell, i})(\sum^m_{i = 1} C_{\ell, i} - 1)}$
\end{itemize}
\STATE $\hat{M}_{1,2} \leftarrow \hat{M}_2 - \frac{\alpha_0}{\alpha_0 + 1}\hat{M}_1 \hat{M}_1^T$.
\STATE $k \leftarrow 1$, $\tilde{\alpha}_1 \leftarrow 1$.
\WHILE{$\tilde{\alpha}_{k}/\alpha_0 >\epsilon/2 $}
\STATE $k \leftarrow k + 1$.
\STATE $\tilde{\alpha}_{k} \leftarrow \alpha_0(\alpha_0+1) c \beta_0\frac{ V k}{V - k }\sigma_{k-1}(\hat{M}_{1,2})$ 
\ENDWHILE
\STATE (Optional) Run the ECA algorithm (\cite{two_svds_suffice} Algorithm 2) with $k$ as the number of topics and estimate $\hat{\Phi}$ and compute $\hat{\alpha}$ as in \ref{alphaeq}.
\STATE  $k \leftarrow \sum_k \mathbf{1}\{\hat{{\alpha}}_k/\alpha_0>\epsilon/2\}$
\RETURN $k$ as the estimate of $K$.
\end{algorithmic}
\begin{figure}[t]
\label{algfig}
\caption{Model order estimation performance. \emph{ Left}: Our procedure. \emph{Right}:  hLDA procedure.}
\centering
\includegraphics[ width=8cm,height=3cm, trim = 0mm 0mm 0mm 0mm]{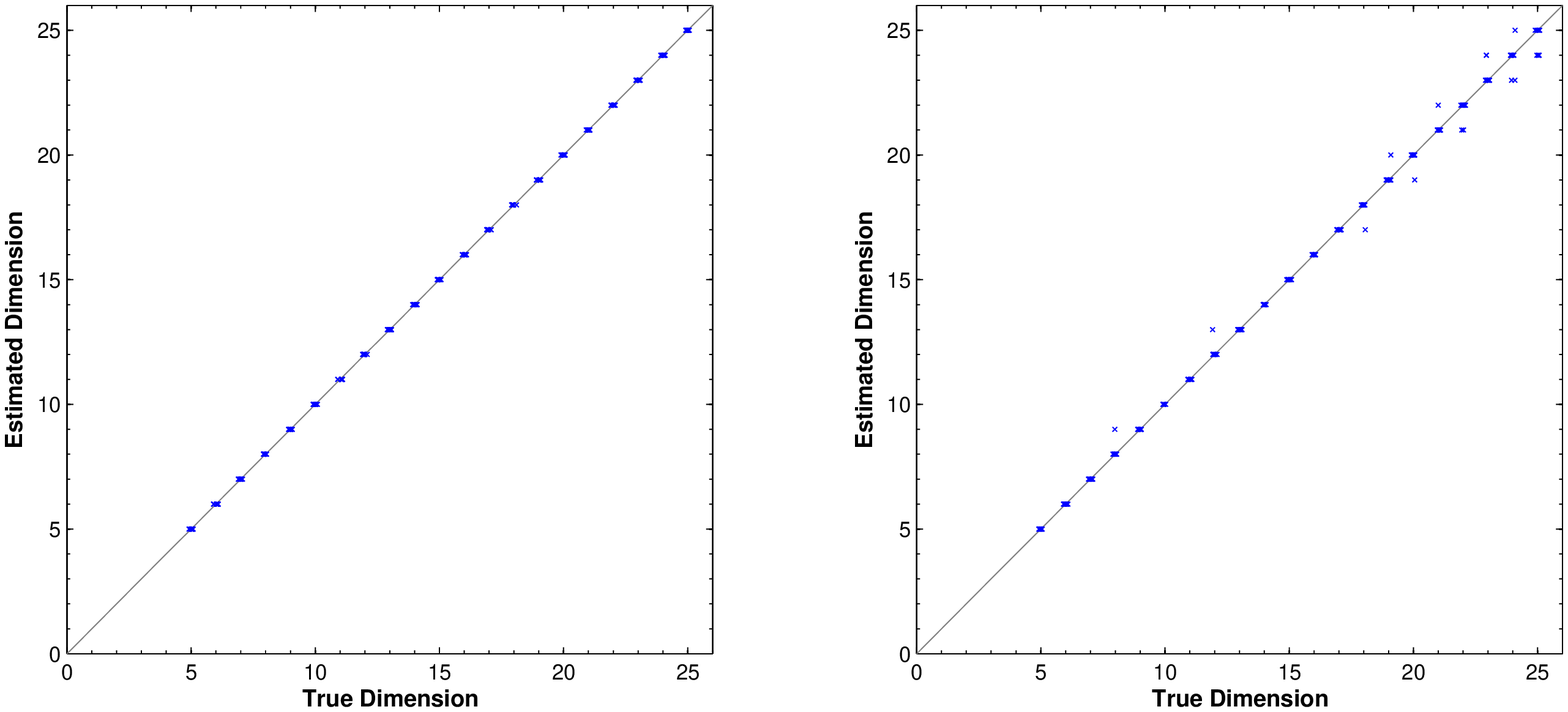} 
\end{figure}

\subsubsection{Empirical comparison to nonparametric Bayesian method}
To compare the performance of our procedure  against previous model order estimation methods, we replicated the same experimental setting used to demonstrate the model selection capabilities of hLDA \cite{griffiths_jordan_tenenbaum_blei_04}.   hLDA (a Gibbs sampling method for the nonparametric equivalent of LDA using the Chinese restaurant process prior) was shown to be much faster and accurate than the Bayes factors method (a likelihood-based hypothesis-testing method) in this setting.   210 corpora of 1000 10-word documents each were generated from an LDA model with $K \in \{5, ... , 25\}$, a vocabulary size of 100, and word-topic matrix $\Phi$ with columns randomly generated from a symmetric Dirichlet ($\beta_i = 0.1$ for $i = 1, ... ,V$, so $\beta_0 = 10$) and $\alpha_0 = 1$.  

hLDA requires the input of a concentration parameter $\gamma$ that controls how frequently a new topic is introduced, so the authors set $\gamma = 1$.  Since Gibbs sampling is subject to local maxima, so the sampler is randomly restarted 25 times for each corpus.  Each time, the sampler is burned in for 10000 iterations and subsequently samples are taken 100 iterations apart for another 1000 iterations.  The restart with the highest average likelihood over the post-burn-in period is selected, and the number of topics for this restart that had non-zero word assignments throughout the burn-in period is selected as the hLDA prediction of model order.  We used the Java implementation of the hLDA Gibbs sampler by \citet{bleier_10}.

For our spectral model selection procedure, we set our topic relevance sensitivity threshhold at $\epsilon:= 3 \times 10^{-2}$, which corresponds to an expected error rate of $\delta <1.5 \times 10^{-3}$.  We implemented our procedure using the \textsc{Matlab} standard library.
Both methods are somewhat sensitive to $\alpha_0$ and $\beta_0$, so we set these parameters to the ground truth for both methods, just as in \cite{griffiths_jordan_tenenbaum_blei_04}.


Figure \ref{algfig} shows that our model outperforms hLDA for this experimental setting (points are jittered slightly to reveal overlapping points). Our procedure correctly estimated the model order for all of the 210 corpora without the optional ECA step, whereas for hLDA the error rate was 10 out of 210.  \cite{griffiths_jordan_tenenbaum_blei_04} reported an error rate of 15 out of 210 for hLDA in this experimental setting, and an error rate of 80 out of 210 for the Bayes factors method.

The running time for hLDA Gibbs sampling procedure was $6040$ sec per corpus on a single thread of a machine with an eight-core 2.67Ghz CPU, while the running time for the spectral model selection procedure without the ECA step was 0.252 sec per corpus.  However, hLDA learns the latent matrix $\Phi$ while estimating the model order.  Including the ECA step in our spectral model selection procedure to learn $\Phi$, the running time increases to 2.05 sec per corpus.
\subsection{Convergence and learnability of spectral methods}
The learnability and sample complexity of spectral algorithms for mixture models depend crucially on the latent variable matrix $\Phi$ being well-conditioned.  For instance \cite{two_svds_suffice}'s algorithm for learning LDA comes with the following guarantee:
\begin{thm}
\label{SVDcomplexity}
(\cite{two_svds_suffice} Thm 5.1).
Fix $\delta \in (0,1)$.  Let $p_{min} = \min_i\frac{\alpha_i}{\alpha_0}$ and let $\sigma_{K}(\Phi)$ denote the smallest (non-zero) singular value of $\Phi$.  Suppose that we obtain $N \ge (\frac{(\alpha_0 + 1)(6 + 6\sqrt{\log{(3/\delta)}})}{p_{min}\sigma_{K}(\Phi)^2})^2$ independent samples of $\mathbf{x}, \mathbf{x}', \mathbf{x}''$ in the LDA model.  \emph{w.p.} greater than $1 - \delta$, the following holds: for $\theta \in \mathbf{R}^K$ sampled uniformly over the sphere $\mathcal{S}^{K-1}$, \emph{w.p.} greater than 3/4, Algorithm 5 in \cite{two_svds_suffice} returns a set $\{ \hat{\Phi}_1, ..., \hat{\Phi}_K \}$ such that there exists a permutation $\pi$ of the columns of $\Phi$ so that for all $i \in \{1, ..., K\}$
\begin{align*}
\|\Phi_i - \hat{\Phi}_{\pi(i)}\| = O\left(\frac{(\alpha_0 + 1)^2K^3}{p^2_{min} \sigma_K(\Phi)^3}\frac{1 + \sqrt{log(1/\delta)}}{\sqrt{N}}\right).
\end{align*} 
\end{thm}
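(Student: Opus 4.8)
The plan is to follow the method-of-moments template underlying \cite{two_svds_suffice} and \cite{anandkumar_hsu_kakade_12}: relate the empirical moments to the population moments by concentration, propagate that error through the whitening and eigendecomposition steps by matrix perturbation theory, and finally un-whiten to obtain a column-wise bound on the recovery error of $\Phi$. Throughout, the small singular value $\sigma_K(\Phi)$ and the minimum mixing weight $p_{min}$ are the quantities that govern how much perturbation is amplified, which is why they sit in the denominators of the final bound.

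First I would establish moment concentration. Since each observation $\mathbf{x}_v = \mathbf{e}_i$ is a canonical basis vector, the plug-in estimators $\hat{M}_1, \hat{M}_2$ and the third-moment tensor consumed by Algorithm 5 are averages of bounded-norm i.i.d. terms. A vector/matrix Bernstein (or bounded-differences) argument then gives, with probability at least $1-\delta$, operator-norm deviations of order $(1 + \sqrt{\log(1/\delta)})/\sqrt{N}$. This is the origin of both the $\sqrt{\log(1/\delta)}$ factor and the $1/\sqrt{N}$ rate in the statement, and it pins down the sample threshold $N \ge ((\alpha_0+1)(6 + 6\sqrt{\log(3/\delta)})/(p_{min}\sigma_K(\Phi)^2))^2$: it is precisely the requirement that the moment error be small relative to the spectral gap $\sigma_K(M_2)$.

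Next I would analyze whitening and eigendecomposition. Writing $M_2 = \Phi\,\mathrm{diag}(w)\,\Phi^T$ after the $M_1 M_1^T$ correction of Eq.~(\ref{alphaeq}), the relevant gap $\sigma_K(M_2)$ is of order $p_{min}\,\sigma_K(\Phi)^2$. The whitening matrix is built from the top-$K$ SVD of $\hat{M}_2$, and by Weyl's inequality together with a Davis--Kahan sin-$\theta$ bound its error is controlled by $\|\hat{M}_2 - M_2\|/\sigma_K(M_2)$; this is where the factor $1/(p_{min}\sigma_K(\Phi)^2)$ first enters. Applying the whitening to the empirical third moment yields a nearly orthogonally decomposable tensor; contracting it along the random direction $\theta \in \mathcal{S}^{K-1}$ produces a symmetric matrix whose eigenvectors are the whitened topic vectors and whose eigenvalues are proportional to the projections of $\theta$ onto those vectors. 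Anti-concentration of the uniform measure on the sphere guarantees, with probability at least $3/4$, a minimum eigenvalue gap of order $1/K$ (up to polynomial factors); this is the source of the $3/4$ probability and of part of the $K^3$ dependence. A standard eigenvector perturbation bound then converts the matrix error and the $1/K$ gap into an error on the recovered whitened vectors, and un-whitening --- which reintroduces a further factor of $1/\sigma_K(\Phi)$ --- followed by selecting the correct permutation $\pi$ yields the bound on $\|\Phi_i - \hat{\Phi}_{\pi(i)}\|$.

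The hard part will be the perturbation bookkeeping in the eigendecomposition: one must track how the whitening error, the raw moment error, and the $1/K$-scale eigenvalue gap compound, and verify that the cumulative amplification produces exactly the claimed exponents --- the cubic $1/\sigma_K(\Phi)^3$, the quadratic $(\alpha_0+1)^2/p_{min}^2$, and the $K^3$ factor --- rather than something weaker. Establishing the eigenvalue-gap lower bound via anti-concentration, and confirming that the gap survives the perturbation so that Davis--Kahan may legitimately be applied, is the delicate step; the rest is concentration and routine norm estimates. Finally I would union-bound the moment-concentration failure event (probability $\delta$) against the gap event (probability $1/4$) to recover the two stated probabilities.
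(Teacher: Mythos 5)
There is nothing in this paper to compare your attempt against: Theorem \ref{SVDcomplexity} is not proved here at all. It is quoted verbatim from \cite{two_svds_suffice} (their Theorem 5.1) and used purely as a black box --- its only role in this paper is to be combined with Theorem \ref{dirichletbound} so that the dependence on the unknown quantity $\sigma_K(\Phi)$ in the sample complexity and error bound can be replaced by explicit functions of $V$, $K$, and $\beta_0$ in the corollary that follows it. So the relevant benchmark is the proof in the cited source, and measured against that, your outline is faithful to the actual argument: the ECA analysis does proceed by concentration of the empirical pair and triple moments at rate $(1+\sqrt{\log(1/\delta)})/\sqrt{N}$, perturbation of the whitening step controlled by the spectral gap of the corrected second moment (which is of order $p_{min}\sigma_K(\Phi)^2/(\alpha_0+1)$, the source of that factor in the sample threshold), an anti-concentration argument showing the random contraction direction $\theta$ induces sufficiently separated eigenvalues with probability $3/4$ (exactly the source of that constant), and an un-whitening step that contributes a further power of $1/\sigma_K(\Phi)$.

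That said, what you have written is a plan, not a proof. The two steps you yourself flag as hard --- the eigenvalue-gap lower bound for the $\theta$-contracted whitened tensor, and the perturbation bookkeeping that produces the specific exponents $K^3$, $\sigma_K(\Phi)^{-3}$, and $p_{min}^{-2}$ rather than weaker ones --- are precisely the quantitative content of the theorem, and in your proposal they are named and deferred rather than carried out. If your goal were to actually reprove the result, those computations (Davis--Kahan applied after verifying the gap survives the perturbation, plus the compounding of whitening error through the eigendecomposition and back) are where all the work lies; everything else in your sketch is routine. If your goal is to use the theorem the way this paper does, no proof is required, only the citation.
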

Theorem \ref{dirichletbound} allows us to replace the dependence on $\Phi$ by a dependence on $V$, $K$, and $\beta_0$:
\begin{corollary}
Let $\alpha_0$, $\delta$, $p_{min}$, $\theta$, $\pi$, and $\{\hat{\Phi}_1, ... , \hat{\Phi}_K\}$ be as in \ref{SVDcomplexity}.  Suppose that we obtain $N = O\left(\left(\frac{\alpha_0 + 1}{p_{min}}\right)^2 \log(1/\delta)\left(\sqrt{\frac{\beta_0 {V K}}{V - K}}\right)^2\right)$ independent samples of $\mathbf{x}, \mathbf{x}', \mathbf{x}''$ in the LDA model.  \emph{w.p.} greater than $1-\delta$, 
\begin{multline*}
\|\Phi_i - \hat{\Phi}_{\sigma(i)} \| = \\
O\left(\left(\frac{\alpha_0 + 1}{p_{min}}\right)^2 \left(\frac{\sqrt{\beta_0 V K}}{V - K}\right)^3\frac{1 + \sqrt{log(1/\delta)}}{\sqrt{N}}\right)
\end{multline*}
\end{corollary}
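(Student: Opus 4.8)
This corollary is a substitution result: Theorem~\ref{SVDcomplexity} expresses both the sample-size requirement and the recovery error of the spectral algorithm in terms of the smallest singular value $\sigma_K(\Phi)$, and Theorem~\ref{dirichletbound}---together with the operator-norm bound $\|\Phi^{+}\| = 1/\sigma_K(\Phi) \le c\,\sqrt{\beta_0 V K}/(V-K)$ derived immediately after it---controls $\sigma_K(\Phi)$ from below with high probability. The plan is therefore to apply Theorem~\ref{SVDcomplexity} \emph{conditionally} on the favorable spectral event for $\Phi$, replace every occurrence of $\sigma_K(\Phi)$ by its bound, and then combine the two failure probabilities with a union bound.

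Concretely, I would proceed as follows. First, fix $V \ge V_0$ and invoke Theorem~\ref{dirichletbound} to obtain the event $E$ that $\sigma_K(\Phi) \ge c_0\sqrt{(V-K)/(\beta_0 V K)}$, which holds with probability at least $1 - \delta'$ over the Dirichlet draw of the columns of $\Phi$; on $E$ we equivalently have $1/\sigma_K(\Phi) \le c\,\sqrt{\beta_0 V K}/(V-K)$. Second, I would verify that the stated sample size implies the hypothesis $N \ge \bigl((\alpha_0+1)(6+6\sqrt{\log(3/\delta)})/(p_{min}\sigma_K(\Phi)^2)\bigr)^2$ of Theorem~\ref{SVDcomplexity}: since this threshold is increasing in $1/\sigma_K(\Phi)^2$, it suffices to substitute the upper bound for $1/\sigma_K(\Phi)^2$ that is valid on $E$, which is exactly what converts the $\Phi$-dependent requirement into a purely $V$-, $K$-, $\beta_0$-, $\delta$-dependent one of the advertised order (absorbing the numerical and polynomial-in-$K$ factors into $O(\cdot)$). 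Third, on $E$ the hypotheses of Theorem~\ref{SVDcomplexity} are met, so with probability at least $1-\delta$ over the independent sampling its error guarantee holds; because that guarantee is monotonically decreasing in $\sigma_K(\Phi)$, substituting $1/\sigma_K(\Phi)^3 \le c^3\,(\sqrt{\beta_0 V K}/(V-K))^3$ into the $\sigma_K(\Phi)^{-3}$ factor yields the stated error term. Finally, a union bound over $E$ and the sampling event gives total failure probability at most $\delta + \delta'$; using the freedom in the constant $c_0$ of Theorem~\ref{dirichletbound} and in the constant hidden in $N$ to make each event fail with probability at most $\delta/2$ produces the $1-\delta$ guarantee after renaming.

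The only genuinely delicate point is the probabilistic coupling, which I would state cleanly. Theorem~\ref{SVDcomplexity} is proved for a \emph{fixed} $\Phi$ with the failure probability taken over the $N$ samples, whereas Theorem~\ref{dirichletbound} randomizes over $\Phi$ itself. Since the corpus is generated conditionally on $\Phi$, these two sources of randomness factor, so conditioning on $E$ and then applying the fixed-$\Phi$ guarantee is legitimate and the union bound is valid. Everything else---propagating the bound through the second and third powers of $1/\sigma_K(\Phi)$ and absorbing constants together with the $6+6\sqrt{\log(3/\delta)}$ factor into the $O(\cdot)$---is routine bookkeeping; the main place to be cautious is tracking the exponents of $V-K$, $K$, and $\beta_0$ so that the substituted sample-complexity and error expressions line up with the stated forms.
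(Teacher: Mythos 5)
Your proposal is correct and follows essentially the same route as the paper: the paper treats this corollary as an immediate substitution of the bound $1/\sigma_K(\Phi) \le c\sqrt{\beta_0 V K}/(V-K)$ from Theorem \ref{dirichletbound} into both the sample-size requirement and the error bound of Theorem \ref{SVDcomplexity}, with the two failure probabilities combined. Your explicit treatment of the probabilistic coupling (conditioning on the favorable spectral event for $\Phi$ before invoking the fixed-$\Phi$ guarantee, then taking a union bound) is a point the paper glosses over entirely, and it is a welcome clarification.
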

Proposition \ref{main2} follows from assuming that the variance parameter $\beta_1 = \beta_0/V$ of each entry remains constant as $V$ increases (so that $\beta_0 = O(V)$), and from assuming that $K$ is fixed, so that 
\begin{align*}
\sigma_K(\Phi) &= O(\frac{\sqrt{V(V + K )\beta_1}}{V - K}) \\
&= O(\sqrt{\beta_0 V})
\end{align*}
\section{Conclusions and Further Work}

In this paper, we have derived a novel procedure for determining the number of latent topics in Latent Dirichlet Allocation.  Our experiments suggest that this procedure can outperform nonparametric Bayesian models learned using MCMC. 

Our results rely on a adapting random-matrix-theoretic results to the case of rectangular noncentered matrices, and connecting these results to the spectral properties of the moments of data generated by an LDA model.  Similar random-matrix theoretic results should be applicable to the problem of finding the number of latent factors in many other mixture models with similar conditional independence properties, and we plan to present such results in future work.  






\bibliography{masterbib}
\bibliographystyle{icml2014}

\appendix
\begin{lemma} \label{mcdiarmid1}
(Tropp \cite{tropp_11} Thm. 5.1 (Eigenvalue Bennett Inequality).  Consider a finite sequence $\{X_j\}$ of independent, random, self-adjoint random matrices with dimension $V$, all of which have zero mean.  Given an integer $k \le V$, define $\sigma^2_k := \lambda_k \left( \sum_j \mathbf{E}(X_j^2)\right)$.  Then, for all $t \ge 0$,
$$
\mathbf{P}\left(\lambda_1(\sum_j X_j) \ge t \right) \le V \exp \left(\frac{\sigma_k^2}{\max_j\|X_j\|^2}h(\frac{\max_j\|X_j\| t}{\sigma_k^2})\right),
$$
where the function $h(u) = (1 + u)\log(1 + u) - u$ for $u \ge 0$.
\end{lemma}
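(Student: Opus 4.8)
The statement is Tropp's matrix Bennett inequality, and the natural route is the matrix Laplace transform method (the matrix analogue of the Chernoff/Cram\'er argument). Writing $S := \sum_j X_j$, the plan is to begin from the master tail bound
\[
\mathbf{P}\!\left(\lambda_1(S) \ge t\right) \le \inf_{\theta > 0} e^{-\theta t}\, \mathbf{E}\,\operatorname{tr}\exp(\theta S),
\]
which follows by applying Markov's inequality to the monotone scalar map $\lambda \mapsto e^{\theta \lambda}$ and passing to the trace exponential as a surrogate for $e^{\theta \lambda_1(S)}$ (using $e^{\theta\lambda_1(S)} \le \operatorname{tr}\exp(\theta S)$). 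The entire proof then reduces to controlling the matrix moment generating function $\mathbf{E}\,\operatorname{tr}\exp(\theta S)$ and optimizing over $\theta$.

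The first key input is the subadditivity of the matrix cumulant generating function, which rests on Lieb's concavity theorem: the map $A \mapsto \operatorname{tr}\exp(H + \log A)$ is concave on the positive-definite cone. Applying this iteratively across the independent summands $X_j$ yields
\[
\mathbf{E}\,\operatorname{tr}\exp(\theta S) \le \operatorname{tr}\exp\!\left(\sum_j \log \mathbf{E}\, e^{\theta X_j}\right).
\]
The second input is a Bennett-type bound on each individual $\mathbf{E}\, e^{\theta X_j}$. Here I would start from the scalar inequality $e^{\theta x} \le 1 + \theta x + g(\theta)\,x^2$, valid for $|x| \le L$ with $L := \max_j\|X_j\|$ and $g(\theta) := (e^{\theta L} - 1 - \theta L)/L^2$, transfer it to the self-adjoint $X_j$ via the spectral mapping (transfer) rule, take expectations using $\mathbf{E} X_j = 0$, and then invoke $I + M \preceq e^{M}$ together with operator monotonicity of the logarithm to conclude $\log \mathbf{E}\, e^{\theta X_j} \preceq g(\theta)\,\mathbf{E}(X_j^2)$.

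Combining the two inputs gives $\mathbf{E}\,\operatorname{tr}\exp(\theta S) \le \operatorname{tr}\exp\!\big(g(\theta)\sum_j \mathbf{E}(X_j^2)\big)$, and bounding the trace of a positive-definite exponential by $V$ times its largest eigenvalue turns this into $V\exp\!\big(g(\theta)\,\lambda_1(\sum_j \mathbf{E}(X_j^2))\big)$. Substituting back into the master bound, the exponent becomes $-\theta t + g(\theta)\sigma^2$, and the choice $\theta = L^{-1}\log(1 + Lt/\sigma^2)$ minimizes it, producing exactly $-\frac{\sigma^2}{L^2}\, h(Lt/\sigma^2)$ with $h(u) = (1+u)\log(1+u) - u$. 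This recovers the displayed bound (up to the sign in the exponent, which should be negative).

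I expect two things to be the real work rather than the bookkeeping. The first is Lieb's concavity theorem itself, the only genuinely nontrivial ingredient and the step that makes the matrix argument succeed where a naive scalar-style splitting of the independence would fail; I would cite it rather than reprove it. The second, more delicate point is the appearance of $\lambda_k$ rather than $\lambda_1$ in the definition of $\sigma_k^2$: the argument above only controls the trace by $\lambda_1(\sum_j\mathbf{E}(X_j^2))$, so an intermediate-eigenvalue variance proxy would require a genuine refinement (for instance restricting the trace exponential to a well-chosen subspace, at the cost of altering the dimensional factor $V$). I would therefore first want to confirm that the intended statement is not simply $\lambda_1$ with $k$ a typographical artifact before attempting that sharpening.
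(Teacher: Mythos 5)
The paper contains no proof of this lemma at all: it appears in the appendix purely as a citation of Tropp's theorem, so the only meaningful comparison is against the cited source, and your sketch is precisely that proof --- the matrix Laplace transform master bound, subadditivity of the matrix cumulant generating function via Lieb's concavity theorem, the Bennett-type semidefinite bound $\log \mathbf{E}\, e^{\theta X_j} \preceq g(\theta)\,\mathbf{E}(X_j^2)$, the trace-to-$\lambda_1$ step costing the factor $V$, and optimization at $\theta = L^{-1}\log(1 + Lt/\sigma^2)$. Both of your caveats are also correct and flag genuine transcription errors in the paper's statement: the exponent must carry a negative sign (as written the right-hand side exceeds $V$ and the bound is vacuous), and Tropp's theorem takes the variance proxy to be $\sigma^2 = \lambda_1\bigl(\sum_j \mathbf{E}(X_j^2)\bigr)$, i.e.\ the operator norm, so the subscript $k$ is an artifact --- apparently a conflation with the Gittens--Tropp bound for interior eigenvalues, which indeed requires exactly the Courant--Fischer subspace restriction you anticipate, replaces the dimensional factor $V$ by $V-k+1$, and defines $\sigma_k^2$ through a minimization over subspaces rather than as $\lambda_k$ of the total variance. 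In short, your reconstruction proves the corrected statement by the intended method, and the sharpening to interior-eigenvalue variance proxies should not be attempted for the statement as literally written, since with $\sigma_k^2 < \sigma_1^2$ the claimed bound on $\lambda_1$ is false in general.
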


\end{document}